%% file: white_hole.tex
\documentclass[11pt]{article}

\usepackage[utf8]{inputenc}
\usepackage[T1]{fontenc}
\usepackage{lmodern}
\usepackage[margin=1in]{geometry}
\usepackage{amsmath,amssymb,amsthm,mathtools}
\usepackage{graphicx}
\usepackage{booktabs}
\usepackage{multirow}
\usepackage{array}
\usepackage[table]{xcolor}
\usepackage{tikz}
\usetikzlibrary{positioning,arrows.meta,shapes,fit,calc,backgrounds}
\usepackage{hyperref}
\usepackage{etoolbox}
\usepackage{appendix}
\usepackage[capitalise,noabbrev]{cleveref}
\usepackage{enumitem}
\usepackage{caption}
\usepackage{subcaption}
\usepackage{microtype}
\usepackage{listings}
\usepackage{algorithm}
\usepackage{algpseudocode}
\usepackage{xspace}
\usepackage[numbers,square,sort&compress]{natbib}

\hypersetup{
  colorlinks=true,
  linkcolor=blue!50!black,
  citecolor=blue!50!black,
  urlcolor=blue!50!black,
}

\AtBeginEnvironment{appendices}{\crefalias{section}{appendix}}

\newcommand{\method}{\textsc{HyperQuant}\xspace}
\newcommand{\methodlong}{Hadamard, optimallY Packing, Entropy Rice-coding}
\newcommand{\R}{\mathbb{R}}
\newcommand{\Z}{\mathbb{Z}}
\newcommand{\E}{\mathbb{E}}
\newcommand{\Var}{\operatorname{Var}}
\newcommand{\Cov}{\operatorname{Cov}}
\newcommand{\Vor}{\mathcal{V}}
\newcommand{\Latt}{\Lambda}
\newcommand{\Qfn}{Q_{\Latt}}
\newcommand{\modproj}{\pi_{\Latt}}
\newcommand{\inner}[2]{\langle #1,\, #2 \rangle}
\newcommand{\norm}[1]{\lVert #1 \rVert}

\newcommand{\snr}{\mathrm{SNR}}
\newcommand{\dB}{\mathrm{dB}}
\newcommand{\bff}{\textsc{bf16}\xspace}
\newcommand{\fpEightLong}{\textsc{fp8-e4m3}\xspace}
\newcommand{\Eeightmzero}{\textsc{e8m0}\xspace}
\newcommand{\fpEight}{\textsc{fp8}\xspace}
\newcommand{\fpFour}{\textsc{fp4}\xspace}
\newcommand{\intEight}{\textsc{int8}\xspace}
\newcommand{\intFour}{\textsc{int4}\xspace}
\newcommand{\nvfp}{\textsc{nvfp4}\xspace}
\newcommand{\mxfp}{\textsc{mxfp4}\xspace}
\newcommand{\Eone}{E_{8}} 
\newcommand{\Dfour}{D_{4}} 
\newcommand{\Atwo}{A_{2}} 
\newcommand{\Eoneint}{E_{8}^{\mathrm{int}}}
\newcommand{\Dfourint}{D_{4}^{\mathrm{int}}}
\newcommand{\Atwoint}{A_{2}^{\mathrm{int}}}
\newcommand{\Zoneint}{\Z_{1}^{\mathrm{int}}}
\newcommand{\ppl}{\mathrm{PPL}}
\newcommand{\Unif}{\operatorname{Uniform}}
\newcommand{\diag}{\operatorname{diag}}
\newcommand{\vol}{\operatorname{vol}}

\theoremstyle{plain}
\newtheorem{theorem}{Theorem}
\newtheorem{lemma}{Lemma}
\newtheorem{proposition}{Proposition}
\newtheorem{corollary}{Corollary}

\theoremstyle{definition}
\newtheorem{definition}{Definition}

\theoremstyle{remark}
\newtheorem*{remark}{Remark}

\title{\method: A Rate--Distortion-Optimal Quantization Pipeline for Large
Language and Diffusion Models}

\author{ \begin{tabular}{ccc} Yuval Domb & Hadar Sackstein & Tomer Solberg \\ \multicolumn{3}{c}{\texttt{research@moonmath.ai}} \end{tabular} }
\date{}

\begin{document}
\maketitle

\begin{abstract}
\input{sections/abstract.tex}
\end{abstract}

\setcounter{tocdepth}{2}
{\small\tableofcontents}
\vspace{4pt}\hrule\vspace{8pt}

\input{sections/intro.tex}
\input{sections/prelim.tex}
\input{sections/method.tex}
\input{sections/results.tex}
\input{sections/ablation.tex}
\input{sections/conclusion.tex}

\bibliographystyle{plainnat}
\bibliography{refs}

\begin{appendices}
\appendix
\input{appendix/dither_proof.tex}
\input{appendix/lattice_constants.tex}
\end{appendices}

\end{document}

%% file: sections/abstract.tex
We present \method (\methodlong), a unified post-training quantization pipeline for
the weights and the KV cache of large language and diffusion
transformers. Across a suite of self-contained experiments
(\Cref{tab:abstract-best}), \method outperforms the recent HIGGS
scheme at every operating point from 3 to 5 bits per scalar (bps) on
weights, and beats both TurboQuant and OCTOPUS on KV quantization down
to 1.7 bps. Beyond the LLM setting, \method quantizes the
19B-parameter LTX-2 DiT video model with no observable per-frame
artifacts. End-to-end on an H100 at 4~bps, \method
compresses the linear weights $\sim\!3.9\times$ and the KV cache $\sim\!3.79\times$ at
near-lossless quality.

\method combines four known ideas into a single construction:
(i)~a per-tile Randomized Hadamard Transform that makes the
per-coordinate distribution of weights and activations approximately
Gaussian; (ii)~quantization
to a low-dimensional optimal lattice ($\Eone$, $\Dfour$, $\Atwo$, or
$\Z$); (iii)~lossless bit-stripping and near-entropy-optimal
variable-length Rice coding of the lattice indices; and
(iv)~bias-correction methods for the KV cache that
keep the reconstruction unbiased under inner products, preserving
attention semantics. We further integrate the pipeline with 8-bit and
4-bit Tensor-Core MMA paths (\fpEightLong, \intEight, \nvfp, \mxfp),
and find that \intEight beats \fpEight on the post-RHT lattice output. Project page: \url{https://moonmath.ai/hyperquant/}

\begin{center}\small
\captionof{table}{Typical \method operating points across settings.
Weights/KV/\intEight-MMA rows are Llama-3.1-8B-Instruct on WikiText-2;
the OCTOPUS row is KV-only on Qwen2.5-7B-Instruct (perplexity ($\ppl$)
$\Delta\%$ at $32$-token residual window); LTX-2 is the 19B DiT video
model.}\label{tab:abstract-best}
\begin{tabular}{l c c c c}
\toprule
Setting & rate & criterion & \method & reference \\
\midrule
Weights$+$KV cache, \intEight MMA  & 4 bps & $\ppl$ $\downarrow$ & $7.50$ ($+0.47\%$)  & $7.16$ (\bff) \\
Weights    & 4 bps & $\Delta \ppl\%$ $\downarrow$ & $\mathbf{+3.8\%}$            & $+6.4\%$ (HIGGS) \\
Weights    & 3 bps & $\Delta \ppl\%$ $\downarrow$ & $\mathbf{+22.1\%}$            & $+33\%$ (HIGGS) \\
KV cache  & 2 bps & $\Delta \ppl\%$ $\downarrow$ & $\mathbf{+7.4\%}$ & $+34.7\%$ (OCTOPUS) \\
KV cache  & 2 bps & compression $\uparrow$ & $\mathbf{6.4\times}$ & $3.0\times$ (TurboQuant) \\
KV cache  & 1.7 bps & $\Delta \ppl\%$ $\downarrow$ & $\mathbf{+26.9\%}$ & -- \\
LTX-2 video & 4 bps & LPIPS $\downarrow$ & $0.20$--$0.21$ & $0$ (\bff) \\
\bottomrule
\end{tabular}
\end{center}

%% file: sections/intro.tex
\section{Introduction}\label{sec:intro}

Frontier language and generative models~\citep{llama31,ltx2} routinely exceed tens of billions of parameters and produce KV caches that dominate inference memory at modern context lengths.
Autoregressive decoding is memory-bandwidth-bound: each token requires streaming the entire weight set and KV cache while performing only a thin matrix-vector product, keeping arithmetic intensity well below the hardware's compute-to-bandwidth ratio~\citep{pope2023scaling}.
Post-training quantization (PTQ) turns this bottleneck into a
\emph{rate-distortion} compression problem.

For weight quantization, a long line of work (GPTQ~\citep{frantar2023gptq}, AWQ~\citep{lin2024awq}, SmoothQuant~\citep{xiao2023smoothquant}, OmniQuant~\citep{shao2024omniquant}) has chipped away at the rate, typically at the cost of a calibration dataset and per-layer optimization.
Data-free schemes are simpler and generally preferred for deployment.
The state of the art among them, HIGGS~\citep{malinovskii2025higgs}, identifies two levers: (1)~apply an RHT to weights so their per-coordinate distribution is approximately Gaussian, and (2)~quantize to multi-dimensional codebooks that are MSE-optimal for that Gaussian.
Its headline result, a linearity theorem reducing global perplexity damage to per-layer $\ell_2$ error, justifies focusing the design on per-layer MSE.

HIGGS, however, leaves rate on the table: its codebooks are finite
Lloyd grids with \emph{fixed-rate} indices, and information theory
predicts that an \emph{entropy-coded} quantizer of equal MSE always
needs fewer bits~\citep{lookabaugh1989high}.
Our measurements on real LLM weights (\Cref{sec:results-weights}) confirm the gap: HIGGS's index entropy is $0.6$--$5.9\,\%$ below its fixed bit budget at 3--5~bps.
Lattice coding theory provides the solution~\citep{conway1999sphere,zamir2014lattice}: combine a \emph{lattice} quantizer with a variable-length code over its indices.

KV-cache quantization has converged on a different rotation-plus-marginal scheme.
TurboQuant~\citep{zandieh2025turboquant} rotates each head's KV vector, exploits the Beta marginal of the resulting unit-norm coordinates, and Lloyd-quantizes each scalar.
OCTOPUS~\citep{octopus2025} extends the marginal trick to coordinate \emph{triplets} via an octahedral parameterization, gaining nearly an extra bit at 2~bps.
Both are data-free, yet both pay the same fixed-rate overhead as HIGGS, on a different marginal.
The closest lattice-based contemporary is NestQuant~\citep{savkin2025nestquant}, which combines nested Gosset lattices with a calibration-style QA-LDLQ correction; quantizing weights and KV cache to 4~bps, it raises Llama-3-8B perplexity by $\sim 3.9\%$ over its \bff baseline.

\paragraph{Contributions.} We propose \method, a data-free, post-training pipeline that applies the rate-distortion-optimal triplet of per-tile RHT, lattice quantization, and entropy coding to both the weights and the KV cache. It integrates with Hopper's \fpEight/\intEight and Blackwell's \nvfp/\mxfp MMA paths~\citep{micikevicius2022fp8,nvidia2022hopper,nvidia2024blackwell}, and is benchmarked end-to-end on Llama-3.1-8B-Instruct and LTX-2-19B (\Cref{sec:method,sec:implementation}).
Our contributions are:
\begin{itemize}[leftmargin=*,itemsep=2pt]
  \item \textbf{Per-tile Randomized Hadamard Transform (RHT).} Each
    linear layer's input and weight are independently randomly rotated
    in tiles sized to the lattice dimension and the hardware's MMA
    tile, implemented via the RHT (\Cref{sec:prelim-rht}). The RHT
    folds into the preceding LayerNorm/RMSNorm where possible (no
    runtime cost), and is otherwise installed as a forward hook.
  \item \textbf{Lattice quantization, bit-stripping, and Rice coding.}
    We quantize each rotated tile with one of $\Eone$ (8-D), $\Dfour$
    (4-D), $\Atwo$ (2-D), or scalar $\Z$, strip the bits that lattice
    membership fixes deterministically, and encode the resulting
    indices with a Rice code calibrated on the per-norm Gaussian; the
    realized rate then lands within $\sim\!0.01$~bps of any requested
    target (\Cref{sec:results-weights}).
  \item \textbf{A bias-correction menu for the KV cache:} a per-layer random rotation ($\pm1$ signs or full Quantized
Johnson-Lindenstrauss, QJL~\citep{zandieh2024qjl}) and optional
Schuchman subtractive dither~\citep{schuchman1964dither,zamir1992universal}.
    We prove (\Cref{app:dither_proof}) that subtractive lattice dither
    is \emph{strictly} inner-product unbiased on every cached vector,
    unlike the distribution-average unbiasedness of QJL's 1-bit
    sketch~\citep{zandieh2024qjl}.
  \item \textbf{A rate-distortion decomposition of the $\Atwo$-vs-HIGGS
    gap at matched bps} (\Cref{sec:results-weights}). At 4~bps, a
    $\sim\!0.75\,\dB$ piece is HIGGS's fixed-rate index redundancy
    (recoverable by any entropy-coded retrofit), and a $\sim\!0.36\,\dB$
    piece is the structural advantage of an unbounded codebook over any
    finite one, enabled by variable-length coding; by 5~bps the latter
    grows to $\sim\!1.79\,\dB$ while the index-entropy piece shrinks to
    $\sim\!0.18\,\dB$. Switching from $\Atwo$ to \method's default
    $\Eone$ adds a further $\sim\!0.49\,\dB$ granular gain, a purely
    geometric advantage that is most pronounced at low rates ($0.53$~PPL
    at 3~bps) and negligible above 4.25~bps.
  \item \textbf{A two-regime characterization of KV-cache quantization
    quality} (\Cref{sec:results-kv}): a high-quality regime ($\ge 2.5$~bps)
    where all bias-correction variants lie within $0.04$~PPL, and a
    high-compression regime ($1.7$--$2.5$~bps) where QJL-style rotation
    pulls ahead by up to $\sim\!0.5$~PPL.
  \item \textbf{An end-to-end stress test on the 19B-parameter LTX-2
    video DiT,} showing that the same pipeline transfers to a non-LLM
    transformer architecture and delivers $3.7\times$ weight
    compression with no perceptible quality loss
    (\Cref{sec:results-ltx}).
\end{itemize}

\paragraph{Outline.} \Cref{sec:prelim} reviews the classical ingredients (RHT, optimal low-dimensional lattices, Rice coding, and dithering); \Cref{sec:method} assembles them into the \method pipeline, adding the bit-stripping transform that makes Rice coding near rate-optimal, and \Cref{sec:implementation} covers its implementation. \Cref{sec:results,sec:ablation} give benchmark comparisons against HIGGS, TurboQuant, and OCTOPUS, together with per-component ablations. \Cref{sec:conclusion} concludes and suggests future directions.

\section{Related work}\label{sec:related}

\paragraph{Weight quantization with rotations and finite codebooks.}
HIGGS~\citep{malinovskii2025higgs} is the data-free state of the art:
RHT plus a multi-dimensional Lloyd codebook with
fixed-rate indices. \method shares this architecture but replaces the
finite Lloyd codebook with an infinite lattice (codeword density set by
a continuous SNR knob) and bit-strips and entropy-codes the index
stream with a Rice code rather than transmitting at a fixed $\log_2 N$
bits per index. \Cref{sec:results-weights} quantifies both differences.

\paragraph{Calibration-based methods.} GPTQ~\citep{frantar2023gptq},
AWQ~\citep{lin2024awq}, OmniQuant~\citep{shao2024omniquant},
SmoothQuant~\citep{xiao2023smoothquant}, SpQR~\citep{dettmers2023spqr},
and SliceGPT~\citep{ashkboos2024slicegpt} use calibration data to refine
per-channel scales, salvage outlier features, or solve a Hessian-aware
weight-allocation problem. \method is data-free by design and orthogonal
to these methods; composing its bit-allocation knob with an LDLQ-style
calibration update~\citep{tseng2024quipsharp,savkin2025nestquant} is a
natural future direction.

\paragraph{KV-cache quantization.} TurboQuant~\citep{zandieh2025turboquant}
rotates each head's KV vector and Lloyd-Max scalar-quantizes the
resulting Beta-distributed coordinates, reaching $4$--$7\times$
compression with near-zero quality loss. OCTOPUS~\citep{octopus2025}
extends this to triplets via an octahedral parameterization, pushing
to extreme ($\le 2$-bit) operating points. Both stay strictly scalar
(or 3-D) after rotation; \method instead uses true multi-dimensional
lattices ($\Eone$ is 8-D) with a variable-length code, giving higher
granular gain and an unbounded codebook. On the bias side, both
TurboQuant and OCTOPUS offer only distribution-average unbiasedness;
subtractive dither~\citep{schuchman1964dither,zamir1992universal,erez2005closeness},
which \method adopts, is strictly per-vector unbiased, as we prove in
\Cref{app:dither_proof}.

\paragraph{Nested-lattices.} NestQuant~\citep{savkin2025nestquant} is
the closest contemporary: like \method it uses the $\Eone$ lattice,
but relies on calibration-style QA-LDLQ post-processing. \method stays data-free, substituting an entropy code and a richer
rotation menu; QA-LDLQ is orthogonal to our design and could be
composed with \method as a future calibration step. On a
baseline-normalized basis, \method's data-free W$+$KV path costs
$+4.6\%$ at $4$~bps on Llama-3.1-8B, within a fraction of a point of
NestQuant's $+3.9\%$ obtained \emph{with} QA-LDLQ; NestQuant's own
ablation shows that removing QA-LDLQ raises its cost to $+7.6\%$.

\paragraph{Diffusion and video transformers.} Quantization of
diffusion transformers~\citep{peebles2023dit,esser2024sd3,ltx2} is
less explored than LLM quantization, and most published numbers
target image rather than video models. The
LTX-2~\citep{ltx2} stress test in \Cref{sec:results-ltx} is, to our
knowledge, the first end-to-end PTQ result on a billion-parameter
video DiT, complementing earlier OCTOPUS results~\citep{octopus2025}
on the Wan-1.3B DiT.

\paragraph{Numerical formats.} \fpEight was standardized on NVIDIA's
H100~\citep{micikevicius2022fp8,nvidia2022hopper}; the smaller \fpFour
formats (\nvfp and OCP \mxfp) target the Blackwell
generation~\citep{rouhani2023mxfp,ocpmxfp,nvidia2024blackwell}. The
two \fpFour formats differ in scale encoding (\fpEightLong in \nvfp vs.\
power-of-2 \Eeightmzero in \mxfp) and block size ($16$ vs.\ $32$). \method
targets both; our experiments show \nvfp is the only one
quality-viable for KV quantization (\Cref{sec:results-fp4}).

%% file: sections/prelim.tex
\section{Preliminaries}\label{sec:prelim}

This section reviews the four classical ingredients the rest of the
paper builds on: the RHT,
optimal low-dimensional lattices as vector quantizers, Rice
entropy coding, and the subtractive-dither and random-rotation
bias-correction mechanisms. The material is well-established
and included to fix notation. \Cref{sec:method} assembles these
ingredients into \method, identifying the design choices that diverge
from the classical constructions.

\subsection{Randomized Hadamard Transform}\label{sec:prelim-rht}

The RHT composes the Walsh-Hadamard matrix $H_n$ (an $n\times n$
orthonormal matrix, $O(n\log n)$ Cooley-Tukey butterfly) with a
random sign diagonal $D = \operatorname{diag}(\pm 1)$:
\begin{equation}\label{eq:rht}
  \text{RHT}_n(x) \;=\; H_n D \, x, \qquad D_{ii} \in \{-1, +1\}
  \text{ iid uniform.}
\end{equation}
Two properties are key. First, RHT is a fast
Johnson-Lindenstrauss-style mixer: applying $H_n D$ to any
deterministic $x \in \R^n$ yields a vector whose
\emph{empirical} distribution is, with high probability, close to
$\mathcal{N}(0, \norm{x}^2/n \cdot I)$~\citep{ailon2009fast,dasgupta2010sparse,halko2011finding}.
Second, because $\text{RHT}_n$ is orthogonal, applying it before quantization
and inverting it after preserves $\ell_2$ error, since the lattice cell
volume is the same in the pre- and post-RHT spaces. RHT therefore
\emph{redistributes} the per-coordinate
quantization error from a few outlier coordinates
(in the raw activation space) into an approximately isotropic
spread~\citep{chee2023quip,malinovskii2025higgs}.

\subsection{Optimal low-dimensional lattices}\label{sec:prelim-lattices}

A lattice $\Latt \subset \R^n$ is the set of integer linear
combinations of $n$ basis vectors. Two lattice invariants
characterize its quality as a vector quantizer:
\begin{itemize}[leftmargin=*,itemsep=4pt]
  \item the \emph{normalized second moment}
        \[
          G(\Latt) := \frac{1}{n}\,
          \E_{U \sim \Unif(\Vor(\Latt))}\!\big[\norm{U}^2\big]
          \cdot \det(\Latt)^{-2/n},
        \]
        the per-coordinate mean-squared error of quantizing a
        uniform point of the Voronoi cell $\Vor(\Latt)$
        to the origin, made scale-invariant by the
        $\det(\Latt)^{-2/n}$ factor. It is bounded
        below by $1/(2\pi e)$, attained asymptotically
        by the $n$-dimensional ball, and a smaller $G(\Latt)$
        gives lower granular distortion at fixed
        rate~\citep{conway1999sphere,zamir2014lattice}: intuitively,
        $G(\Latt)$ measures how ``round'' the Voronoi cell is.
  \item the \emph{packing density} $\Delta(\Latt)$, the fraction of
        $\R^n$ covered by non-overlapping balls of radius
        $r_{\text{pack}}(\Latt)$ (the Voronoi inradius,
        half the lattice minimum distance) centered at
        every lattice point. A higher $\Delta(\Latt)$ fits more
        codewords at a fixed minimum
        separation, controlling how densely the
        codebook tiles space at fixed cell radius~\citep{conway1999sphere}.
\end{itemize}
In every dimension $\le 8$ for which the optimum is known, the
densest sphere packing also achieves the smallest known $G(\Latt)$:
$\Z = \mathbb{Z}$ in 1-D, the hexagonal $\Atwo$ in 2-D, the
Schl\"afli lattice $\Dfour$ in 4-D, and the Gosset lattice $\Eone$
in 8-D~\citep{conway1999sphere}.
\Cref{tab:lattice-constants} collects their normalized
second moments and the resulting high-rate gap to the Shannon bound.

\begin{table}[ht]
\centering
\small
\begin{tabular}{l c c c c}
\toprule
Lattice & $n$ & $G(\Latt)$ & High-rate gap to Shannon & Decoder
$O(\cdot)$ \\
\midrule
$\Z$    & 1 & $1/12 = 0.0833$ & $1.53\,\dB$ & $O(1)$ \\
$\Atwo$ & 2 & $0.0802$        & $1.36\,\dB$ & $O(1)$ \\
$\Dfour$& 4 & $0.0766$        & $1.17\,\dB$ & $O(1)$ \\
$\Eone$ & 8 & $0.0717$        & $0.88\,\dB$ & $O(1)$ \\
$\infty$-D sphere & --- & $1/(2\pi e) = 0.0586$ & $0\,\dB$ & --- \\
\bottomrule
\end{tabular}
\caption{Classical lattice quantizer constants for the four
lattices used in this paper. The high-rate gap to the
Shannon bound is $10\log_{10}(G(\Latt) \cdot 2\pi e)$. The
asymptotic limit $1/(2\pi e)$ is the infinite-dimensional sphere
bound. These normalized second moments are scale-invariant, so they
apply unchanged to the integer realizations
$\Eoneint$/$\Dfourint$/$\Atwoint$/$\Zoneint$ used in our
code.}\label{tab:lattice-constants}
\end{table}

\paragraph{Nearest-neighbor decoding.} For $\Atwo$, $\Dfour$, and
$\Eone$, the closest-point algorithm follows
Conway-Sloane~\citep[Ch.~20]{conway1999sphere}: round each
coordinate to the nearest integer; if the result violates the
lattice's parity constraint, move to the nearest point of the
complementary coset (the half-integer coset
$D_8 {+} \tfrac{1}{2}\mathbf{1}$ for $\Eone$, the
parity-flipped neighbor for $\Dfour$ and $\Atwo$), and keep the
candidate with smaller residual norm. The work is $O(1)$ per scalar.
$\Eone$ is the highest-dimensional lattice admitting such a
constant-time closed-form decoder~\citep{conway1999sphere}.

\paragraph{Granular gain.} The advantage of
multi-dimensional vector quantization (VQ) over scalar quantization
is the freedom to choose the shape of the quantization
cell. A scalar quantizer's cell is forced to be an interval, the
Voronoi cell of $\Z$, fixing its normalized second moment
at $G(\Z) = 1/12 \approx 0.0833$. In higher dimensions the optimal
Voronoi cell grows rounder and $G$
descends toward the ball's limit
$G_\infty = 1/(2\pi e) \approx 0.0586$. The ratio
$10\log_{10}(G(\Z)/G(\Latt))$ is the \emph{granular gain} of
$\Latt$ over the scalar quantizer, the source-coding
counterpart of the channel-coding shaping gain from
constellation design~\citep{forney1989multidim,zamir2014lattice}.

The four lattices traverse the $1.53\,\dB$ budget from $\Z$ to
the Shannon bound: $\Atwo$ recovers $0.17\,\dB$,
$\Dfour$ recovers $0.37\,\dB$, and $\Eone$ recovers $0.65\,\dB$
($42\%$ of the total) while retaining an $O(1)$ constant-time
decoder. The 24-D Leech lattice, the best known structured lattice
in its dimension~\citep{conway1999sphere}, adds a further
$\sim\!0.38\,\dB$ ($25\%$) at the cost of a significantly more
complex decoder, still leaving $0.50\,\dB$ to Shannon.
Closing the residual gap requires high-dimensional random lattices,
which asymptotically approach the bound~\citep{zamir2014lattice}
but admit no practical nearest-neighbor decoder.
$\Eone$ is therefore not where the gap closes but where the
gain-per-decoder-complexity curve sharply drops.

\subsection{Entropy coding and Rice codes}\label{sec:prelim-rice}

\paragraph{Entropy coding.} For a discrete source $X$ with
probability mass function $p$, Shannon's source coding theorem
bounds the average code length per symbol of any uniquely
decodable code below by the entropy
\[
  H(X) \;=\; -\sum_{x} p(x)\,\log_2 p(x),
\]
and this bound is achievable to within a fraction of a
bit per symbol by practical coders such as Huffman or
arithmetic coding~\citep{cover2006elements}. Entropy is thus
the rate floor for \emph{lossless} compression of a
discrete source. A \emph{lossy} pipeline like ours
splits into two stages: a quantizer maps a
continuous input to a discrete index, trading distortion for the
index bit count (its rate-distortion behavior),
and a lossless entropy coder then represents the
index stream at an expected rate near its entropy.
The entropy coder neither distorts the source nor changes the
quantizer's distortion; it only realizes the
information-theoretic floor on the indices in actual bits.

\paragraph{Variable-length codes and unbounded alphabets.} A
fixed-length code over an alphabet of size $N$ pays exactly
$\log_2 N$ bits per symbol and is undefined when $N = \infty$. A
variable-length code has no such limit: it
addresses an arbitrary discrete alphabet at finite expected rate
whenever the source entropy is finite. This is the operative
advantage in our setting. A lattice quantizer's output is an integer vector
with unbounded support, so a fixed-length code is not even
well-defined; yet for Gaussian-like inputs the integer histogram has
finite entropy, which a variable-length code attains
at finite cost~\citep{cover2006elements,zamir2014lattice}. We
quantify this advantage empirically in \Cref{sec:results-weights}.

\paragraph{Rice codes.} Among variable-length
codes, the Rice code~\citep{rice1979rice} is the practical
near-optimal choice for sources whose integer histogram is
two-sided geometric (Laplacian on the integer lattice);
it is the power-of-two-parameter specialization of the Golomb
code, optimal in this regime. Given a parameter $k$,
a non-negative integer $m$ is encoded as $\lfloor m/2^k \rfloor$
in unary followed by $m \bmod 2^k$ in $k$ raw bits; signed values
use zig-zag interleaving or an explicit sign bit. The
optimal $k$ for a geometric distribution with parameter $p$ is
\[
  k^{*} \;=\; \bigr\lfloor \log_2 \lceil-\ln (2-p)\,/\,\ln(1-p)\rceil\bigr\rfloor.
\]
The histograms we encounter (lattice indices of
RHT-transformed weights and activations) are not
exactly Laplacian, but close enough that a Rice code with
empirically calibrated $k$ stays within $\sim\!0.1$~bps of the
symbols' marginal entropy across our calibration sweep
(\Cref{app:bps-calibration}). We adopt it throughout: it has
constant per-codeword cost and is
a genuine variable-length code over $\Z$, and the $\sim\!0.1$~bps it
concedes to an ideal marginal coder buys a stateless, table-free
$O(1)$ decoder. No \emph{context} coder can do better,
since the stripped symbols carry essentially no inter-symbol
redundancy (\Cref{app:strip-optimality}).

\subsection{Bias correction: rotation and subtractive dither}
\label{sec:prelim-dither}

A nearest-neighbor lattice quantizer is \emph{deterministic} in its
input, hence biased: the reconstruction $\hat{x} =
\Qfn(x)$ satisfies $\hat{x} = x + e(x)$ with a non-zero, $x$-dependent
error $e(x) \in -\Vor(\Latt)$. For weight quantization
this is harmless: biased reconstructions are
absorbed by surrounding affine parameters and disappear into the
linearity theorem~\citep{malinovskii2025higgs}. For the KV cache,
however, attention
\begin{equation*}
  \mathrm{Attention}(q,k,v) = \mathrm{softmax}\bigl(\tfrac{1}{\sqrt{d}}q^{\top}k\bigr)v
\end{equation*}
is \emph{linear in $k$ and $v$ inside the dot product}, so a
deterministic bias in $k$ accumulates through the softmax denominator
and shifts attention scores systematically.

Two classical mechanisms can remove this bias.

\paragraph{Random rotation (QJL-style).} Apply a Haar-uniform
orthogonal matrix $S \sim \Unif(\mathrm{O}(n))$
before quantization and $S^{\top}$ after:
\begin{equation}\label{eq:qjl-recipe}
  \hat{x}_{\mathrm{rot}}(x; S) \;=\; S^{\top}\, \Qfn(S\,x).
\end{equation}
Averaged over $S$, the error $e_{\mathrm{rot}}(x) = -S^{\top}
\modproj(Sx)$ is zero-mean and isotropic~\citep{zandieh2024qjl,zandieh2025turboquant}.
In deployment, however, $S$
is drawn once per layer and frozen, so the error is deterministic
given $(x, S_0)$ and biased on every individual cached vector. We
formalize this in \Cref{prop:qjl}.

\paragraph{Subtractive dither (Schuchman-Zamir-Feder).} Draw $U
\sim \Unif(\Vor(\Latt))$ fresh on every forward call, independently
of everything else, and reconstruct
\begin{equation}\label{eq:dither-recipe}
  \hat{x}_{\mathrm{dith}}(x; U) \;=\; \Qfn(x + U) - U.
\end{equation}
The error $e_{\mathrm{dith}}(x; U) = -\modproj(x + U)$ is then
\emph{exactly uniform} on $-\Vor(\Latt)$ and independent of $x$, by the
Crypto Lemma~\citep{schuchman1964dither,zamir1992universal}
(self-contained proof in \Cref{app:dither_proof}). In particular,
\begin{equation}\label{eq:dither-mean-zero}
  \E_{U}\bigl[\inner{q}{\hat{x}_{\mathrm{dith}}(x; U)} \,\big|\, x\bigr]
  \;=\; \inner{q}{x}
  \qquad \forall\, q, x \in \R^n.
\end{equation}
We call this \emph{strict, per-vector} inner-product
unbiasedness, in contrast to QJL's averaged-over-$S$ unbiasedness.

\paragraph{Composing rotation and dither.} The two mechanisms are
statistically orthogonal: the rotation is a deterministic-given-$x$
linear map and the dither is independent of $x$, so the
inner-product unbiasedness of \eqref{eq:dither-mean-zero} survives
composition with any orthogonal pre-rotation and any further
deterministic-given-$x$ linear post-processing (\Cref{prop:composition}).
The composed scheme keeps the rotation's isotropic error and the
dither's strict unbiasedness. A standard alternative to the Haar-uniform
rotation is the sign-rotation $S = \operatorname{diag}(\pm
1)$, which costs only $n$ bits per layer and suffices when the source
is approximately exchangeable under coordinate permutations.

%% file: sections/method.tex
\section{The \method design}\label{sec:method}\label{sec:method-steps}

\Cref{fig:pipeline} is the end-to-end \method block diagram and the
map for this section. The \emph{encode} path (top, left to right)
turns a \bff tile into a compact code; the \emph{decode} path
(bottom, right to left) inverts every active block in reverse order
and feeds the low-precision matrix-multiply-accumulate (MMA). A
single encode path serves both linear weights (offline) and
the KV cache (online), differing only in the
two bias-correction blocks, Rotate and Add dither, which run
for the KV cache alone. We cover the blocks in figure
order, each forward block with its inverse under one heading
(marked \emph{KV only} where applicable), folding the
integer-lattice detail into the Quantize and Strip blocks where it is
used.

\begin{figure}[t]
\centering
\resizebox{\textwidth}{!}{%
\begin{tikzpicture}[
    >=Stealth,
    node distance=6mm and 6mm,
    core/.style={draw, rounded corners=2pt, minimum width=1.7cm,
                 minimum height=0.95cm, align=center, fill=blue!6,
                 font=\footnotesize},
    bias/.style={draw, rounded corners=2pt, minimum width=1.7cm,
                 minimum height=0.95cm, align=center, dashed,
                 fill=orange!14, font=\footnotesize},
    cast/.style={draw, rounded corners=2pt, minimum width=1.7cm,
                 minimum height=0.95cm, align=center, fill=violet!20,
                 font=\footnotesize},
    ghost/.style={draw, dotted, rounded corners=2pt, minimum width=1.7cm,
                  minimum height=0.95cm, align=center, draw=black!35,
                  text=black!50, font=\footnotesize},
    io/.style={align=center, font=\footnotesize\itshape},
    term/.style={align=center, font=\footnotesize\itshape, text=black!65},
    flow/.style={->, thick},
    tag/.style={font=\scriptsize, text=black!55},
]
\node[core]                  (erht)   {RHT};
\node[bias, right=of erht]   (erot)   {Rotate};
\node[core, right=of erot]   (enorm)  {Normalize};
\node[bias, right=of enorm]  (edith)  {Add dither};
\node[core, right=of edith]  (elat)   {Quantize};
\node[core, right=of elat]   (estrip) {Strip};
\node[core, right=of estrip] (erice)  {Rice encode};
\foreach \a/\b in {erht/erot,erot/enorm,enorm/edith,%
                   edith/elat,elat/estrip,estrip/erice}
  {\draw[flow] (\a) -- (\b);}

\node[core,  below=15mm of erice]  (drice)   {Rice decode};
\node[core]  at (estrip |- drice)  (dstrip)  {Unstrip};
\node[core]  at (elat   |- drice)  (ddeq)    {Dequantize};
\node[bias]  at (edith  |- drice)  (dundith) {Undither};
\node[core]  at (enorm  |- drice)  (dunnorm) {Denormalize};
\node[bias]  at (erot   |- drice)  (dunrot)  {Derotate};
\node[ghost] at (erht   |- drice)  (drht)    {RHT$^{-1}$};
\node[cast,  left=of drht]         (dcast)   {Cast};
\node[term,  left=of dcast]        (dmma)    {MMA};
\foreach \a/\b in {drice/dstrip,dstrip/ddeq,ddeq/dundith,%
                   dundith/dunnorm,dunnorm/dunrot,dunrot/drht,drht/dcast,%
                   dcast/dmma}
  {\draw[flow] (\a) -- (\b);}
\node[tag, below=1.5mm of drht, text width=2.1cm, align=center]
  {absorbed by MMA operand};

\node[io] at (dmma |- erht) (ein) {W/KV};
\draw[flow] (ein) -- (erht);
\node[font=\small\bfseries, anchor=south west]
  at ([yshift=4mm]ein.north west) {Encode};
\node[font=\small\bfseries, anchor=north west]
  at ($(ein.west |- dmma.south) + (0,-4mm)$) {Decode};

\draw[->, thick, dotted] (erice.east)
  to[out=0, in=0, looseness=2.5]
  (drice.east)
;

\node[core, minimum width=3.5mm, minimum height=3.5mm,
      label={[font=\scriptsize]right:shared core}]
  (lg1) at ($(ddeq.south west)+(0,-12mm)$) {};
\node[bias, minimum width=3.5mm, minimum height=3.5mm,
      label={[font=\scriptsize]right:bias corr.\ (KV only)},
      right=24mm of lg1] (lg2) {};
\node[cast, minimum width=3.5mm, minimum height=3.5mm,
      label={[font=\scriptsize]right:cast},
      right=34mm of lg2] (lg3) {};
\end{tikzpicture}%
}
\caption{\method end-to-end pipeline. \textbf{Encode} (top, left to
right) and \textbf{Decode} (bottom, right to left), each
inverse directly below its forward block. Colour marks
applicability: blue is the shared core (weights and KV), orange
dashed is bias correction (KV cache only, ablated in
\Cref{sec:ablation}), and purple is the cast to the Tensor-Core
format. The RHT has no decode block: orthogonal along the
contraction axis, it is absorbed into the matching rotation on the
other MMA operand (ghosted). The MMA is the terminal
consumer, not a codec step. Each block names the subsection that
documents it.}\label{fig:pipeline}
\end{figure}

\subsection{RHT}\label{sec:method-rht}
\emph{RHT.} Partition $x \in \R^n$ into tiles of size
$n_{\mathrm{tile}} = 2^k$ matched to the MMA unit ($128$ on
H100/Blackwell) and apply the RHT~\eqref{eq:rht};
the $O(n_{\mathrm{tile}}\log n_{\mathrm{tile}})$
butterfly folds into the preceding LayerNorm.

\noindent
\emph{Inverse RHT.} None is applied explicitly. The RHT is orthogonal
along the contraction axis, so $W = (WH^\top)H$: the rotation
cancels against the matching rotation on the other MMA operand, and
the decoder never runs an $\mathrm{RHT}^{-1}$ block (ghosted in
\Cref{fig:pipeline}).

\subsection{Rotate (KV only)}\label{sec:method-rotate}
\emph{Rotate.} Optionally rotate by \texttt{none}, \texttt{signs}
($S = \diag(\pm1)$, one bit/coordinate, self-inverse), or \texttt{qjl}
(Haar $S \sim \Unif(\mathrm{O}(n))$); the best choice tracks the
bit-rate (\Cref{sec:results-kv}).

\noindent
\emph{Derotate.} Apply $S^{\top}$. Storage cost and the rotation-dither interaction are detailed in
\Cref{sec:method-kv}.

\subsection{Normalize}\label{sec:method-normalize}
\emph{Normalize.} Rescale to the lattice's calibration radius: for
KV, each (head,\,token) vector by its own norm,
\begin{equation*}
  \tilde{x} \;=\; \alpha\sqrt{n}\,\frac{Sx}{\norm{Sx}},
\end{equation*}
with $\alpha = \alpha(\snr,\Latt)$ the closed-form scale realizing
the target SNR (\Cref{sec:prelim-dither,app:calibration}). Being
deterministic in $x$, this preserves unbiasedness
(\Cref{prop:composition}).

\noindent
\emph{Denormalize.} Multiply back by $\alpha^{-1}$ and the stored
norm.

\subsection{Add dither (KV only)}\label{sec:method-dither}
\emph{Add dither.} Optionally add a fresh $U \sim \Unif(\Vor(\Latt))$
before quantization.

\noindent
\emph{Undither.} Subtract the same $U$ after dequantization. By the
Crypto Lemma the error is then uniform on $\Vor(\Latt)$ and the
reconstruction is strictly inner-product unbiased
(\Cref{cor:ip-unbiased}, \Cref{app:dither_proof}).

\subsection{Quantize}\label{sec:method-quantize}
\emph{Quantize.} Map $\tilde{x}$ (plus dither, if enabled) to its
nearest point $c = \Qfn(\cdot)$ in the integer realization of
$\Latt$; decoding is $O(1)$ for $\Eone,\Dfour,\Atwo$~\citep[Ch.~20]{conway1999sphere}
and nearest-integer rounding for $\Z$.

\noindent
\emph{Dequantize.} Re-embed the stored integer code vector as its
lattice point.

\paragraph{Integer realizations.} The quantization, stripping, Rice
coding, and decoding stages touch the lattice only through (a) its
nearest-neighbor decoder and (b)~the integer code vector it emits.
We are therefore free to pick any \emph{integer} realization of each
lattice, tuned for cheap arithmetic and compact storage. We
use the four families
$\{\Eoneint, \Dfourint, \Atwoint, \Zoneint\}$ of
\Cref{tab:lattices-int}. Two properties motivate these embeddings:

\begin{itemize}[leftmargin=*,itemsep=2pt]
  \item \textbf{8-bit code budget.} After per-vector
    $\alpha$-scaling, the integer coordinates are approximately
    $\mathcal{N}(0,\alpha^2)$, so a
    signed-byte overflow ($|c_i|>127$) is a $127/\alpha$-sigma tail
    event. Even at the top of our sweep ($5$~bps, where $\alpha$ is
    largest) the binding lattice sits $\ge 7\sigma$ from the boundary:
    fewer than $\sim\!10^{-3}$ of the model's $\sim\!7\times 10^{9}$
    coordinates are expected to saturate, and a saturation is a
    harmless clamp to $\pm127$, not a corruption
    (\Cref{app:alpha-calibration}). The raw code vector thus fits
    one signed byte per scalar, matching the storage tile
    of Hopper/Blackwell Tensor Cores and giving \method a natural fallback
    when entropy coding is disabled.
  \item \textbf{Closed-form membership constraints.} Each lattice
    obeys a small set of integer linear constraints
    (parity, coset, sum modulo a power of two). These
    pin a fixed number of bits per code vector, which can
    be stripped from the bitstream before Rice coding without loss.
\end{itemize}

\paragraph{The four embeddings.}
\begin{align*}
\Eoneint &= 2\,\Eone \subset \Z^8, \quad
\Dfourint = \Dfour \subset \Z^4, \quad
\Zoneint = \Z,\\
\Atwoint &= \{(\sqrt{3}\,n_y, n_x) : n_y,n_x \in \Z,\,
n_y{+}n_x \equiv 0 \pmod 2\}.
\end{align*}
The factor-of-two dilation embeds $\Eone$ in $\Z^8$,
clearing the half-integer coset $D_8 {+} \tfrac{1}{2}\mathbf{1}$
of the bare $\Eone$; the $\alpha$-scaling undoes it.
The bare $\Dfour$ is the integer checkerboard lattice
$\{x \in \Z^4 : \sum_i x_i \equiv 0 \pmod 2\}$, which has no
half-integer coset, so $\Dfourint = \Dfour$ already lives in $\Z^4$
and needs no dilation. For
$\Atwoint$ we store the two integer coefficients
$(n_y, n_x)$, folding the $\sqrt{3}$ scaling of the $y$-axis
into $\alpha$ so it never enters the integer
arithmetic. The bare $\Zoneint$ has no nontrivial
membership constraint.

\subsection{Strip}\label{sec:method-lattices}
\emph{Strip.} Strip the bits that lattice membership pins
deterministically (lossless), leaving a compact
symbol stream for Rice coding.

\noindent
\emph{Unstrip.} Reconstruct the pinned bits from the parity
relation and undo the halving.

\paragraph{Membership constraints.} The following equations
characterize membership and form the basis of the bit-stripping
transform.
\begin{itemize}[leftmargin=*,itemsep=2pt]
  \item $\Eoneint$: there exists a \emph{coset bit}
    $c \in \{0,1\}$ such that all coordinates share the same
    parity, $c_i \equiv c \pmod 2$ for $i = 0,\ldots,7$, and the
    halved coordinates satisfy $\sum_{i=0}^{7}(c_i - c)/2 \equiv 0
    \pmod 2$.
  \item $\Dfourint$: $\sum_{i=0}^{3} c_i \equiv 0 \pmod 2$.
  \item $\Atwoint$: $n_y + n_x \equiv 0 \pmod 2$.
  \item $\Zoneint$: no constraint.
\end{itemize}
Each modulo-2 constraint pins one bit of the code vector
deterministically given the rest.

\paragraph{The bit-stripping transform.} For each lattice we
apply, before Rice coding, an invertible map
$\mathrm{Strip}_\Latt : \Z^n \to \Z^{n'}$ that removes the pinned
bits and compacts the remaining symbols:
\begin{center}\small
\begin{tabular}{l >{\raggedright\arraybackslash}p{7.5cm} l c}
\toprule
Lattice & Computation & Output & Saving \\
\midrule
$\Eoneint$ &
  $c = c_0\bmod 2$;
  $s_i = (c_i-c)/2$, $i=0,\ldots,7$;
  $p = (\sum_{i=0}^{6}s_i)\bmod 2$;
  $t = (s_7-p)/2$ &
  $(c,\;s_0,\ldots,s_6,\;t)$ &
  $\mathbf{1.0}$~b/sc \\[3pt]
$\Dfourint$ &
  $p = (c_0+c_1+c_2)\bmod 2$;
  $t = (c_3-p)/2$ &
  $(c_0,\;c_1,\;c_2,\;t)$ &
  $\mathbf{0.25}$~b/sc \\[3pt]
$\Atwoint$ &
  $p = n_x\bmod 2$;
  $t_y = (n_y-p)/2$ &
  $(t_y,\;n_x)$ &
  $\mathbf{0.5}$~b/sc \\[3pt]
$\Zoneint$ &
  (none) &
  $n$ &
  $0$~b/sc \\
\bottomrule
\end{tabular}
\end{center}
The stripped symbols are signed, so the strip's final step maps each
through the \emph{zig-zag} bijection
$\mathrm{zigzag}(n) = 2n$ for $n \ge 0$ and $-2n{-}1$ for $n < 0$,
which keeps small values small and yields the non-negative indices
the Rice coder of \Cref{sec:prelim-rice} expects.
Each transform is bit-for-bit invertible: the decoder reads the
output stream, recovers the halved symbol~$t$, reconstructs the
dropped coordinate ($s_7$ for $\Eoneint$, $c_3$ for $\Dfourint$)
from the parity bit~$p$ computed on the other symbols, rescales
by~$2$, and, for $\Eoneint$, adds back the coset bit.

\paragraph{Rice parameters.} Halving a symbol narrows its
shifted-geometric distribution and lowers its optimal Rice parameter
by one, so a symbol coded at $k_s$ has its halved counterpart best
coded at $k_t = k_s - 1$ (checked at runtime). The three non-trivial
lattices exploit this differently:
\begin{itemize}[leftmargin=*,itemsep=2pt]
  \item $\Eoneint$: the coset bit $c$ is folded into the low bit
    freed by halving via $\mathtt{comb} = 2 \cdot \mathrm{zigzag}(t)
    + c$. Doubling lifts $\mathtt{comb}$ back to the $s$-symbols'
    scale, so $k_{\mathtt{comb}} = k_s$ and all eight symbols share a
    \emph{single} parameter $k_s$: $c$ rides for free in a bit
    Rice would emit anyway. Without the fold, $t$ would need its own
    $k_t = k_s - 1$ and $c$ a separate uncompressed bit.
  \item $\Dfourint$: with no coset bit to fold, the halved symbol
    $t$ keeps its own parameter, so the stream uses \emph{two}
    levels: $k_s$ for $c_0, c_1, c_2$ and $k_t = k_s - 1$ for $t$.
  \item $\Atwoint$: the two coordinates have different spreads (the
    $\sqrt{3}$-scaled $y$-axis is wider), so they too use \emph{two}
    levels: $k_{t_y}$ for $t_y$ and $k_{n_x}$ for $n_x$.
\end{itemize}

\begin{table}[h]
\centering\small
\begin{tabular}{l c c c c}
\toprule
& $\Eoneint$ & $\Dfourint$ &
$\Atwoint$ & $\Zoneint$ \\
\midrule
$n$                       & 8 & 4 & 2 & 1 \\
Embedding                 & $2\Eone \subset \Z^8$ & $\Dfour \subset \Z^4$ & hex.\ $(\sqrt 3 n_y, n_x)$ & $\Z$ \\
Cosets                    & 2 (even/odd) & 2 & 2 & --- \\
Membership constraints    & coset + sum-mod-4 & sum-mod-2 & sum-mod-2 & none \\
Bits stripped per scalar  & $\mathbf{1.00}$ & $\mathbf{0.25}$ & $\mathbf{0.50}$ & $0$ \\
Rice parameters           & 1 ($k_s$, with $c$) & 2 ($k_s, k_t$) & 2 ($k_{t_y}, k_{n_x}$) & 1 \\
\bottomrule
\end{tabular}
\caption{Integer-coordinate realizations of the four lattices
used in \method. ``Bits stripped per scalar'' is the deterministic
information removed by the bit-stripping transform of
\Cref{sec:method-lattices}; these savings are lossless and
applied before Rice coding.}\label{tab:lattices-int}
\end{table}

\paragraph{Effect on the achievable bit-rate.}
The last row of \Cref{tab:lattices-int} is what the entropy of the
\emph{stripped} symbol stream lower-bounds, not the raw code
vector. At a typical $21\,\dB$ SNR (Gaussian
high-rate slope $\sim\!3.7$\,bps), the four lattices reach
empirical Rice rates of $3.74$, $3.77$, $3.81$, and $3.84$\,bps,
within $0.10$\,bps of the high-rate lattice ideal. Removing the
bit-stripping pass would raise the $\Eoneint$ rate by
$1.0$\,bps and the $\Atwoint$ rate by $0.5$\,bps,
\emph{exactly} the deterministic-information overhead Rice
coding cannot otherwise recover.

\begin{remark}[Stripping is rate-optimal, not heuristic]
Stripping does more than delete deterministic bits: it leaves
symbols that are \emph{statistically} near-independent. At high rate
their per-symbol marginal entropy already equals the lattice ideal
$R_D + \tfrac12\log_2(2\pi e\,G(\Latt))$, the rate of an
entropy-coded lattice quantizer, so a memoryless coder such as
Rice is near rate-optimal by construction, with no inter-symbol
redundancy left for a context model. We prove this in
\Cref{app:strip-optimality}.
\end{remark}

\subsection{Rice encode}\label{sec:method-rice}
\emph{Rice encode.} Entropy-code the stripped symbols with the
calibrated Rice code (\Cref{sec:prelim-rice}); on-the-fly bit
accounting, fed by the SNR calibration of
\Cref{app:bps-calibration}, lands the realized rate within
$\sim\!0.01$\,bps of any target. $\Eoneint$ and
$\Zoneint$ each use a single Rice parameter (for $\Eoneint$, the
coset bit $c$ is folded into its remainder, above); $\Dfourint$ uses
two ($k_s$ and $k_t = k_s{-}1$) and $\Atwoint$ two
($k_{t_y}, k_{n_x}$).
\emph{Rice decode.} Unpack the bitstream into symbols.

\subsection{Cast (decode only)}\label{sec:method-cast}
On the 8-/4-bit MMA path, the reconstruction is cast at the matmul
boundary to \fpEightLong/\intEight (Hopper) or \nvfp/\mxfp
(Blackwell); pure-\bff deployments skip it. This block has no
encode counterpart and no inverse: it feeds the terminal MMA.
Format choices and the measured \fpEight-vs-\intEight and
\nvfp-vs-\mxfp trade-offs are in \Cref{sec:results-fp8,sec:results-fp4}.

\subsection{Parameters and how to set them}\label{sec:method-params}

\Cref{tab:params} lists every knob \method exposes, our default
values, and a one-line rationale. Most parameters have
broad sweet spots: Hadamard tile size $128$--$1024$ and Rice
parameter $k \in \{0,1,2\}$ all work equally well at every operating
point we tested, so practitioners typically tune only the
\emph{target bits-per-scalar} and \emph{rotation kind}.

\begin{table}[t]
\centering
\small
\begin{tabular}{l l p{8cm}}
\toprule
Knob & Default & Rationale \\
\midrule
Lattice $\Latt$        & $\Eone$        & Best 8-D granular gain, constant-time decoder, fits MMA tile.\\
RHT tile $n_{\mathrm{tile}}$ & $128$    & Matches H100/Blackwell MMA $K$-dim; auto-shrinks if layer is smaller.\\
Target bps $b$         & $4.0$ (W), $3.0$ (KV) & Sweet spot; $\Delta\ppl \le 0.3$ vs \bff at LLM scale.\\
SNR (derived)          & lookup $b \to \snr$ & Invert the empirical rate curve (\Cref{app:calibration}); $\alpha$ then closed-form.\\
Rotation kind (KV)     & \texttt{qjl}   & Default at $b\!\ge\!2$ bps; switch to \texttt{none} at $b\!\le\!1.6$.\\
Dither (KV)            & off            & Enable when strict per-vector unbiasedness is required.\\
Rice parameter $k$     & $1$            & Auto-tuned per layer if requested.\\
\verb|lm_head| precision   & \bff          & Cheap layer; keep full precision to avoid logit clipping.\\
\bottomrule
\end{tabular}
\caption{\method's complete parameter list, with defaults
benchmarked in \Cref{sec:results,sec:ablation}. Most knobs are
relatively insensitive; only the target bps and (at very low bps)
the rotation kind require tuning per
deployment.}\label{tab:params}
\end{table}

\section{Implementation}\label{sec:implementation}

\subsection{Application to linear weights}\label{sec:method-weights}

The weight path (top of \Cref{fig:pipeline}) is applied once at load
time:
\begin{enumerate}[leftmargin=*,itemsep=2pt]
  \item For each linear layer of shape $(m, n)$, partition $W$ into
    tiles of size $n_{\mathrm{tile}}$ along the input dimension and
    independently RHT each tile.
  \item (If MMA path) cast each tile to \fpEight/\intEight.
  \item Lattice-quantize each tile, bit-strip, and Rice-encode.
  \item Store the resulting integer codes and per-tile scales.
\end{enumerate}
During inference decoding occurs on the fly into the MMA's
input format (\fpEight, \intEight, \nvfp, \mxfp, \bff) just in time
for the matmul; for a fused-Triton implementation the dequant fuses
into the matmul prologue and adds small latency over the bare
MMA. We do \emph{not} quantize the \verb|lm_head| / output projection:
its outputs feed directly into the softmax, where quantization noise
is amplified.

\subsection{Application to the KV cache}\label{sec:method-kv}

The KV path (bottom of \Cref{fig:pipeline}) replaces the \bff
cache tensor with the Rice-coded bitstream plus per-vector norms.
Concretely, for each attention layer we install pre-forward hooks on
\verb|k_proj| and \verb|v_proj| that:
\begin{enumerate}[leftmargin=*,itemsep=2pt]
  \item Receive the projection output in shape $[B, T,
    n_{\mathrm{heads}} \cdot d_{\mathrm{head}}]$.
  \item Reshape to $[B, T, n_{\mathrm{heads}}, d_{\mathrm{head}}]$
    and apply the encoding path (pre-rotation through Rice coding)
    on the last axis.
  \item Store the bitstream as the cache (in our pseudo-quant
    harness we keep an equivalent \bff dequantized tensor for
    simplicity).
\end{enumerate}
At read time the decoder returns a \bff tensor of the original shape
(the ``pseudo-quantization'' regime used for all quality measurements);
a true memory-saving implementation stores only the Rice-coded bitstream
and dequantizes on the fly in a fused attention kernel
(\Cref{sec:results-throughput}).

We hook \emph{pre-RoPE}: since RoPE is a per-position orthogonal
rotation that commutes with $\ell_2$ normalization, pre- and post-RoPE
quantization are statistically equivalent, requiring no modification
to the attention forward function; for GQA/MQA~\citep{llama31} the
hook attaches to each head's projection in isolation.

\paragraph{Choice of pre-rotation.} The three options from
\Cref{sec:prelim-dither} differ in storage cost. A Haar-uniform
$S \sim \Unif(\mathrm{O}(n))$ stores $n^2$ floats per layer (64\,KiB
per layer in fp32 for $n{=}128$, totalling $\sim\!4$\,MiB across the
32 Llama-3.1-8B attention layers). The sign-rotation $S = \diag(\pm
1)$ costs only $n$ bits per layer and is self-inverse; it suffices
when post-RHT activations are approximately exchangeable under coordinate
permutations. We benchmark all three (\texttt{none}, \texttt{signs},
\texttt{qjl}) in \Cref{sec:results-kv}: at $\ge 2$~bps \texttt{qjl}
gives the best PPL; at $\le 1.6$~bps rotation hurts and
\texttt{none} is best; \texttt{signs} is a near-free middle ground.

\paragraph{Composing rotation with dither.} When subtractive
dither is enabled, we draw the rotation $S$ once per
layer at quantize time and the dither $U$ once per forward call;
the two are independent and the composed scheme inherits
isotropic-error covariance from $S$ and strict per-vector
inner-product unbiasedness from $U$ (\Cref{sec:prelim-dither}).

\subsection{Decoding a variable-length code on the GPU}\label{sec:method-decode}

\paragraph{The challenge.} \method's rate gain comes from the
variable-length Rice code over lattice indices (\Cref{sec:method-rice}):
each index costs a data-dependent number of bits, so the bit position of
symbol $j$ depends on every preceding symbol. This sequential dependency
is the central obstacle to GPU decoding: unlike a fixed-width format
(\intEight or \intFour), a Rice stream cannot be random-accessed or
bulk-loaded, and a naive decoder is a single serial scan.

\paragraph{Approaches.} Four strategies appear in the literature:
\emph{(i)~bit-serial decode}~\citep{wiegand2003h264}, which parallelizes
across independent slices but stays serial within each stream;
\emph{(ii)~fixed-chunk multi-pass synchronization}~\citep{weissenberger2018huffman},
which decodes chunks provisionally in parallel then recovers codeword
boundaries with synchronization passes;
\emph{(iii)~offset-indexed one-pass}, which stores an explicit start
bit-offset per sub-stream so each thread decodes independently in a
single pass; and \emph{(iv)~rANS}~\citep{duda2013ans}, which admits
$N$-way SIMD decoding at near-identical rate but requires changing the
codec.

\paragraph{Our choice.} \method uses the \textbf{offset-indexed one-pass}
decoder. The encoder groups lattice symbols into sub-streams of $S$
symbols and emits, per sub-stream, a 32-bit start bit-offset and the
per-stream Rice parameter $k$. The decoder launches one thread per
sub-stream, each seeking to its offset and decoding $S$ symbols with
the inverse bit-strip and per-tile MMA cast fused inline, directly
into the \bff/\intEight/\fpEight/\nvfp input format. This design is
(i)~single-pass and branch-light (no seam synchronization);
(ii)~cheap in metadata, with the offset$+k$ table at $32/S$ bits per
symbol ($\lesssim\!1\%$ of the $\sim\!4$-bps payload at $S{=}512$);
and (iii)~composable with bit-stripping and the MMA cast in a single
pass. The trade-off is that $S$ couples compression against parallel
occupancy (larger $S$: fewer threads); we use $S{=}512$.

\subsection{Reference implementation}\label{sec:method-refimpl}

The \method reference implementation is $\sim\!3$\,k lines of Python plus
the CUDA implementation of the decoder. A single post-training pass over the loaded \bff
model installs the KV hooks and quantizes-in-place the linear weights; it
is parallelizable per layer and finishes in $\sim\!30$~s for Llama-3.1-8B
on a single H100.

\paragraph{Calibration: SNR-to-bps lookup table.} For a Gaussian input
$x \sim \mathcal{N}(0, \sigma^2 I_n)$ with lattice scaled so that the
per-scalar quantization noise has variance $\sigma_q^2$, the per-scalar
SNR is $\snr = 10\log_{10}(\sigma^2/\sigma_q^2)$, and the Rice-coded
bit-rate is a monotone function of $\snr$. We pre-build a calibration
table mapping SNR to empirical Gaussian bps on $\sim\!10^5$ iid-Gaussian
vectors per operating point and cache it to disk. At inference time we look up the SNR whose realized rate lands within
$\sim\!0.01$\,bps of any requested target (full procedure:
\Cref{app:calibration}), enabling arbitrary fractional bit-rates that
fixed-rate codebooks cannot match.

\paragraph{Hyperparameter auto-tuning.} For a given target rate, the
implementation looks up the lattice SNR via interpolation in the cached
table and applies it uniformly across all quantized tensors. Per-layer
bit allocation is supported but disabled by default; we leave its study
as future work (\Cref{sec:conclusion}).

%% file: sections/results.tex
\section{Experiments}\label{sec:results}

We evaluate \method in three stages: characterizing the method on its
own (\Cref{sec:results-weights}--\Cref{sec:results-ltx}), measuring
deployment cost (\Cref{sec:results-throughput}), and comparing against
prior codecs (\Cref{sec:results-comparison}). Quality numbers are exact
pseudo-quantization PPL/quality measurements; throughput and memory are
measured end-to-end on an H100.

\paragraph{Setup.} The LLM weight/KV experiments use
\texttt{Llama-3.1-8B-Instruct} \citep{llama31} evaluated on the WikiText-2 raw
test split \citep{merity2017wikitext} using 141 non-overlapping windows of
2048 tokens (\bff baseline PPL $7.1606$); the KV comparison against OCTOPUS
additionally uses \texttt{Qwen2.5-7B-Instruct-1M} at $4096$-token context
(\Cref{sec:cmp-octopus}). The video experiment uses \texttt{LTX-2-19B}
\citep{ltx2} on a 32-prompt suite at 512$\times$320 resolution and 49 frames
(Stage~1 only). All experiments are post-training and use no fine-tuning or
calibration data; the calibration that \emph{is} required is the synthetic
SNR$\leftrightarrow$bps lookup table of \Cref{app:bps-calibration}, computed
once.


\paragraph{Part I: Method characterization.}
\subsection{Weight-only quantization}
\label{sec:results-weights}

\method quantizes every \texttt{nn.Linear} weight (except \texttt{lm\_head})
with per-tile RHT, per-block $\alpha$-scaling, lattice quantization,
bit-stripping, and Rice coding of the integer codes
(\Cref{sec:method-steps,sec:method-lattices}); no \fpEight cast and no calibration
data. We sweep target rates from $3.0$ to $5.0$ bps for all four lattices
($\Eone,\Dfour,\Atwo,\Z$). Because the Rice code is variable-length, the rate
knob is \emph{continuous}: a single $\alpha$ per RHT tile lands the
realized rate within $0.01$\,bps of any requested target.

\begin{figure}[!ht]
\centering
\includegraphics[width=\linewidth]{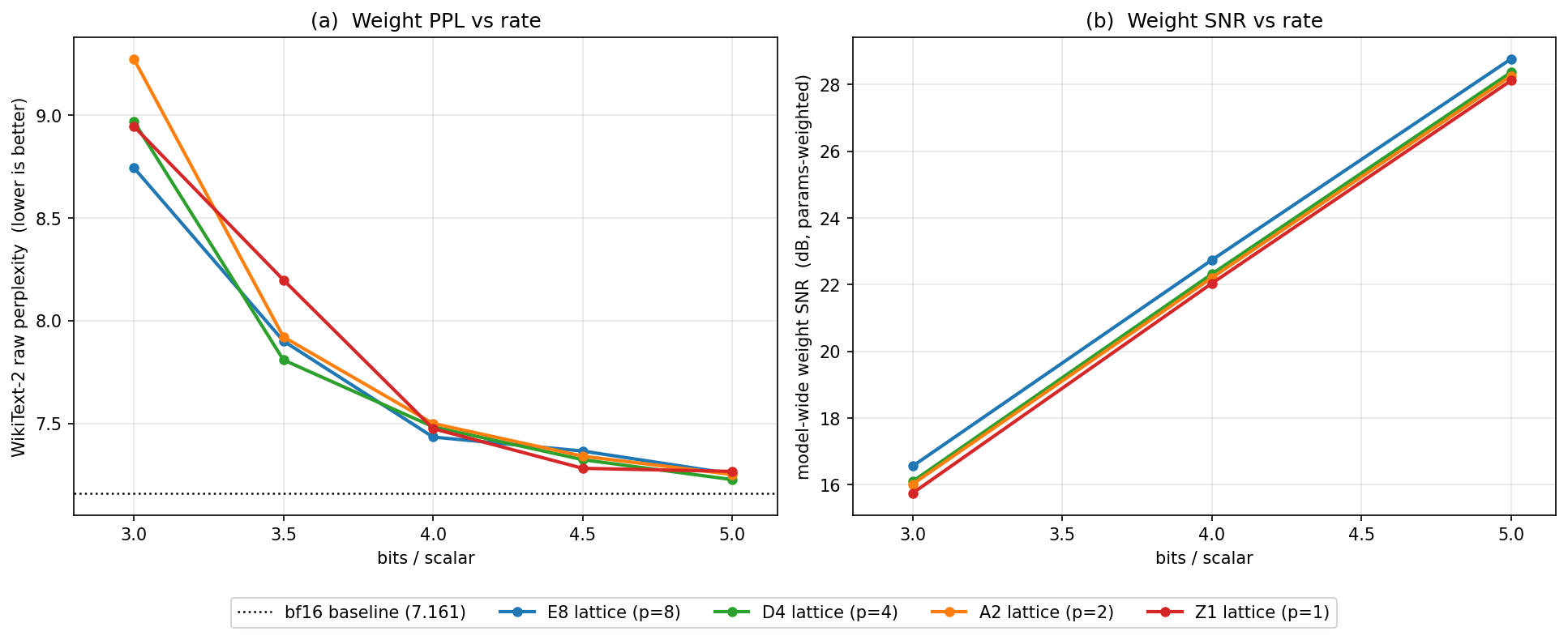}
\caption{Weight quantization on Llama-3.1-8B at matched bps across the
four lattices ($\Eone,\Dfour,\Atwo,\Z$). \textbf{(a)} WikiText-2 PPL vs
rate (\bff baseline $7.161$): at $b\le3.25$ the higher-dimensional
$\Eone$ wins by a clear margin, while at $b\ge4.5$ the four lattices
cluster within $0.05$ PPL, below the run-to-run eval-noise floor.
\textbf{(b)} Model-wide weight SNR (dB, params-weighted) vs rate: the
on-model SNR matches the iid-Gaussian calibration target to within
$\pm0.02$~dB and orders exactly as $\Eone>\Dfour>\Atwo>\Z$ at every
rate.}\label{fig:weight-lattice}
\end{figure}

\paragraph{Lattice ordering follows the Voronoi second moment.} The
per-lattice weight SNR (params-weighted over the $224$ quantized layers) is
constant to within $\pm0.05$~dB across layers and orders exactly as the
textbook normalized second moments $G(\Latt)$
(\Cref{fig:weight-lattice}b and \Cref{tab:lattice-constants}): $\Eone>\Dfour>\Atwo>\Z$. The
advantage of the best lattice over the worst grows with rate
($\Eone-\Z$: $0.81$~dB at 3~bps to $0.65$~dB at 5~bps), reflecting the
high-rate regime where granular gain dominates. In practice we pick $\Eone$
below $3.25$ bps, $\Eone/\Dfour$ in the $3.5$--$4.0$ range, and any lattice
above $4.25$ bps (where the choice is below the PPL noise floor, so kernel
simplicity, $\Z$ scalar, $\Atwo$ 2-D, $\Dfour$ 4-D, $\Eone$ 8-D, decides).

\paragraph{SNR is the sufficient statistic.} Across all four lattices and all
rates, model PPL is a single monotone function of weight SNR, as the linearity
theorem~\citep{malinovskii2025higgs} predicts: it reduces global perplexity
damage to a sum of per-layer mean-squared errors, so equal weight SNR implies
equal expected PPL hit regardless of error shape. We therefore calibrate on one
synthetic SNR$\leftrightarrow$bps table and let PPL fall out for free rather
than running end-to-end PPL per configuration; \Cref{sec:cmp-higgs} shows this
collapse holds across schemes too.

\subsection{KV-cache-only quantization and bias correction}
\label{sec:results-kv}

We benchmark the \method KV path with \bff weights, sweeping target bps from
$1.5$ to $4.0$ and dequantizing the cache to \bff before attention.
\Cref{fig:kv-bps} summarizes the rate-quality behavior.

\begin{figure}[!ht]
\centering
\includegraphics[width=\linewidth]{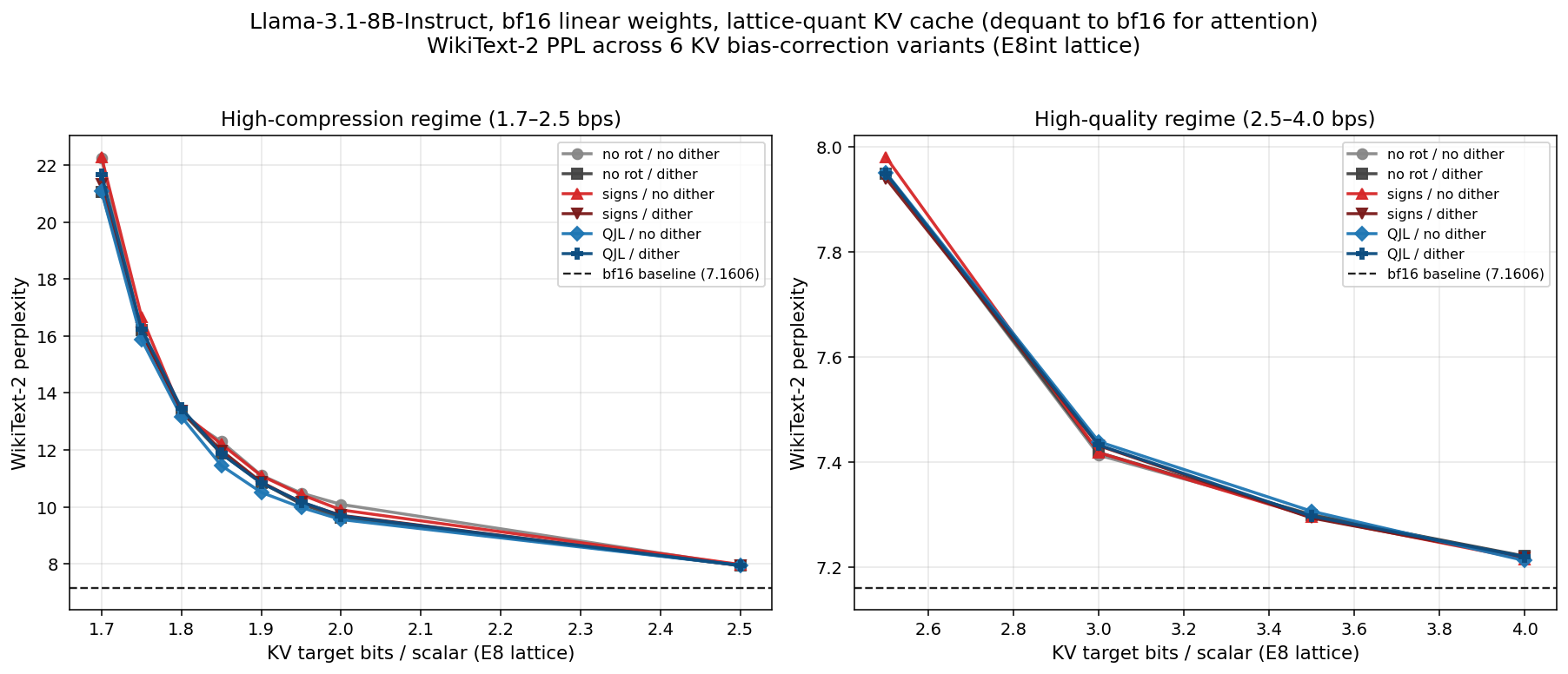}
\caption{KV-only \method on Llama-3.1-8B with \bff weights, shown
over the two working regimes. (Left) High-compression regime
($1.7$--$2.5$~bps): QJL/signs rotation pulls ahead of plain
\texttt{none} as the rate falls, reaching $\sim 0.5$~PPL at 2.0~bps.
(Right) High-quality regime ($2.5$--$4.0$~bps): all six
bias-correction variants collapse onto the \bff baseline, within
$0.04$~PPL of one another at $b\ge 2.5$.}
\label{fig:kv-bps}
\end{figure}

\paragraph{Two operating regimes.}
\Cref{fig:kv-bps} splits cleanly into two working regimes,
summarized in \Cref{tab:kv-regimes}:

\begin{table}[!ht]
\centering
\small
\begin{tabular}{l l l l}
\toprule
Regime & bps range & Best variant & $\Delta$ PPL vs.\ \bff \\
\midrule
High-quality      & $\ge 2.5$ & \texttt{none} (variants tied) & $0.05$--$0.79$ \\
High-compression  & $1.7$--$2.5$ & \texttt{qjl} / \texttt{signs} & $0.79$--$13.9$ \\
\bottomrule
\end{tabular}
\caption{The two working regimes of KV-only lattice quantization.
In the high-quality regime ($b\ge 2.5$) every bias-correction
variant is within $0.04$~PPL, so the cheapest (\texttt{none}) is
fine; in the high-compression regime ($1.7$--$2.5$~bps) QJL or the
$\pm1$ \texttt{signs} rotation pulls ahead, by $\sim 0.5$~PPL at
2.0~bps.}\label{tab:kv-regimes}
\end{table}

\paragraph{High-compression floor.} The marginal cost steepens
sharply toward the bottom of the high-compression regime: each
$0.05$-bps step below $b\approx 1.8$ roughly doubles the added PPL
($+2.9 \to +6.1$ PPL per $0.05$-bps for \texttt{none} between
1.8~bps and 1.7~bps). The lattice cell radius grows faster than the
per-vector signal at this rate, and the linearity-of-attention
argument that makes the high-quality regime forgiving breaks down,
setting the $\sim 1.7$-bps practical floor for data-free KV
quantization. (\Cref{sec:cmp-octopus} shows that a small residual
window of recent tokens largely defers this floor.)

\paragraph{Sweet spot at 3.0~bps.} \method achieves $\Delta\ppl =
+0.25$ at 3.0~bps with an $81\%$ KV-cache memory reduction. This
is, to our knowledge, the \emph{best published KV-only quality at
3 bps on Llama-3.1-8B without calibration}.

\paragraph{Bias-correction variants.} \method's KV path supports two
bias-correction variants (\Cref{sec:method-kv}): a per-layer random
\emph{rotation} (none, a cheap $\pm1$ \emph{signs} diagonal, or a full Haar
\emph{QJL} matrix) to reduce within-vector anisotropy, and subtractive
\emph{dither} to make the inner-product error strictly unbiased per cached
vector (provable via the Schuchman conditions; \Cref{app:dither_proof}).
\Cref{tab:kv-bias} sweeps all six (rotation $\times$ dither) combinations
KV-only at $\Eone$, 2.0~bps (\bff weights), the high-compression
regime where the choice actually moves PPL.

\begin{table}[!ht]
\centering\small
\begin{tabular}{l c c c c c}
\toprule
variant & rotation & dither & PPL & $\Delta$PPL vs.\ \bff & per-vec bias \\
\midrule
\texttt{none}        & --   & off & $10.089$ & $+2.928$ & $-2.2\!\times\!10^{-2}$ \\
\texttt{dither}      & --   & on  & $9.622$  & $+2.462$ & $\approx 0$ \\
\texttt{signs}       & $\pm1$ & off & $9.896$  & $+2.735$ & $-4.0\!\times\!10^{-2}$ \\
\texttt{signs+dith}  & $\pm1$ & on  & $9.706$  & $+2.546$ & $\approx 0$ \\
\texttt{qjl}         & Haar & off & $\mathbf{9.557}$ & $\mathbf{+2.397}$ & $+1.9\!\times\!10^{-2}$ \\
\texttt{qjl+dith}    & Haar & on  & $9.685$  & $+2.524$ & $\approx 0$ \\
\bottomrule
\end{tabular}
\caption{Bias-correction variants for \method KV on Llama-3.1-8B
(KV-only, $\Eone$ at 2.0~bps, \bff weights; $\Delta$PPL is over the
\bff baseline at $7.161$). Unlike at $4$ bps, the choice matters in this
high-compression regime: the variants span $\sim0.53$ PPL. \texttt{qjl}
(no dither) is the PPL winner, and both a rotation (\texttt{qjl}/\texttt{signs})
and \texttt{dither} independently improve on plain \texttt{none}; dither
additionally buys exact per-vector unbiasedness ($\approx 0$ bias), and the
$\pm1$ \texttt{signs} rotation tracks QJL at $1/128$ the stored
memory.}\label{tab:kv-bias}
\end{table}

At 4~bps this choice is in the noise (all six within $0.014$ PPL); the
dominant gain there is the \bff$\to$4-bit quantization itself. The spread
opens at lower rates, where rotation pulls ahead by $\sim0.5$ PPL
(\Cref{tab:kv-regimes}) and dither matters most for long-context workloads
where per-vector bias compounds across thousands of tokens. We default to
\texttt{qjl} (or \texttt{signs} when rotation storage is a concern), adding
dither only when provable unbiasedness is required.

\subsection{Full-model quantization at 8-bit MMA precision}
\label{sec:results-fp8}

We compose all four \method components: weights and KV at 4~bps,
with a per-tile \fpEight or \intEight cast for the MMA and optional bias
correction. \Cref{tab:fp8-summary} summarizes the end-to-end quality.

\begin{table}[!ht]
\centering\small
\begin{tabular}{l l c c}
\toprule
MMA cast & Path & PPL & $\Delta$PPL \\
\midrule
\bff      & ---            & $7.161$          & --- \\
\fpEight  & weights-only   & $7.535$          & $+0.37$ \\
\fpEight  & weights$+$KV   & $7.644$          & $+0.48$ \\
\intEight & weights-only   & $7.433$          & $+0.27$ \\
\intEight & weights$+$KV   & $\mathbf{7.503}$ & $+0.34$ \\
\bottomrule
\end{tabular}
\caption{Llama-3.1-8B end-to-end \method at 4~bps with 8-bit MMA.
Adding the KV-cache pipeline on top of the weights$+$8-bit path costs
only $+0.11$ PPL (\fpEight) or $+0.05$ PPL (\intEight). KV bias-correction
choice (\texttt{none}/\texttt{dither}/\texttt{signs}) moves PPL by
$<0.01$ (\texttt{none} shown).}\label{tab:fp8-summary}
\end{table}

\paragraph{INT8 wins on post-RHT data.} \intEight beats
\fpEightLong by $\sim 0.10$ PPL at matched precision
(\Cref{tab:fp8-summary}). Post-RHT, post-lattice tensors have light
tails (almost bounded), so \intEight's $256$ equally spaced levels
are more useful than \fpEight's logarithmic spacing of $200$ effective
levels plus $56$ wasted on the tails. The order flips on raw
activations where outliers dominate; the lattice path renders that
trade-off in favor of \intEight.

\subsection{Full-model quantization at 4-bit MMA precision}
\label{sec:results-fp4}

The Blackwell generation exposes \nvfp (16-element \fpEight-scaled
blocks) and \mxfp (32-element power-of-2-scaled blocks). We feed
\method's lattice output into both formats at 4~bps and 3~bps;
\Cref{tab:fp4-summary} summarizes the results.

\begin{table}[!ht]
\centering\small
\begin{tabular}{l l l c c}
\toprule
Format & Path & Bias & PPL & $\Delta$PPL \\
\midrule
\multicolumn{5}{l}{\emph{4~bps \method}} \\
\nvfp & weights-only & \texttt{none}   & $8.43$        & $+1.27$ \\
\nvfp & weights$+$KV & \texttt{none}   & $9.29$        & $+2.13$ \\
\nvfp & weights$+$KV & \texttt{dither} & $\mathbf{9.24}$ & $+2.08$ \\
\mxfp & weights-only & \texttt{none}   & $9.46$        & $+2.30$ \\
\mxfp & weights$+$KV & \texttt{none}   & $\sim 18$     & $\sim+11$ \\
\mxfp & weights$+$KV & \texttt{dither} & $13.61$       & $+6.45$ \\
\midrule
\multicolumn{5}{l}{\emph{3~bps \method}} \\
\nvfp & weights$+$KV & \texttt{dither} & $\mathbf{12.81}$ & $+5.65$ \\
\mxfp & weights$+$KV & \texttt{dither} & $25.42$       & $+18.26$ \\
\bottomrule
\end{tabular}
\caption{Blackwell \fpFour path at \method lattice bases of $4$ and $3$
bps ($\Delta$PPL vs.\ \bff{} $7.161$). \nvfp${}+{}$dither is the only
\mxfp-class configuration that survives the KV cache; \mxfp's \Eeightmzero
scale loses too much dynamic range to handle KV tails.}
\label{tab:fp4-summary}
\end{table}

Dither has a small effect on \nvfp ($-0.05$ PPL, the
\texttt{none}$\to$\texttt{dither} weights$+$KV rows) but a large
positive effect on \mxfp ($-0.33$ PPL at 4~bps); at 3~bps the
dither rescue on \mxfp jumps to $-5.89$ PPL (model goes from broken
to borderline). The pattern is consistent with the dither role being
\emph{more important at higher quantization noise}, which both
lower bps and the coarser \mxfp grid produce.

\subsection{Beyond LLMs: LTX-2-19B video DiT}
\label{sec:results-ltx}

We apply \method to LTX-2-19B~\citep{ltx2}, a 19B-parameter diffusion
transformer (DiT) for text-to-video synthesis with $1370$ linear
layers, quantizing all weights at 4~bps with \fpEight or \intEight MMA
and leaving the rest of the pipeline (Gemma-3-12B text encoder, VAE,
scheduler) at \bff.

\begin{figure}[!ht]
\centering
\includegraphics[width=0.95\linewidth]{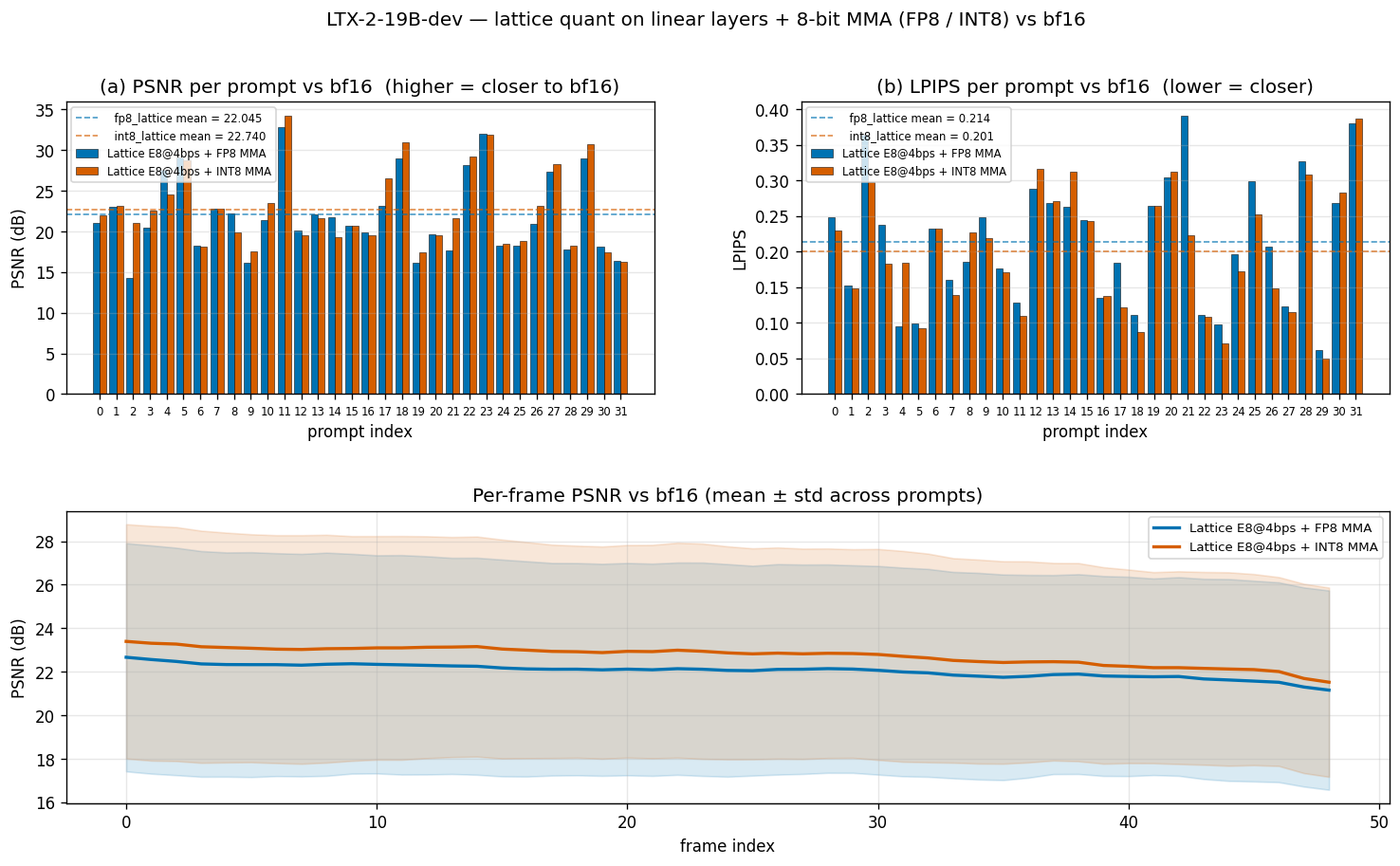}
\caption{Per-prompt PSNR (a) and LPIPS (b), and per-frame PSNR
(mean $\pm$ std) for \method on LTX-2-19B versus the \bff baseline,
on a 32-prompt suite at 512$\times$320, 49 frames. \intEight MMA edges
\fpEight on both PSNR and LPIPS.}\label{fig:ltx-quality}
\end{figure}

\begin{figure}[!ht]
\centering
\includegraphics[width=\linewidth]{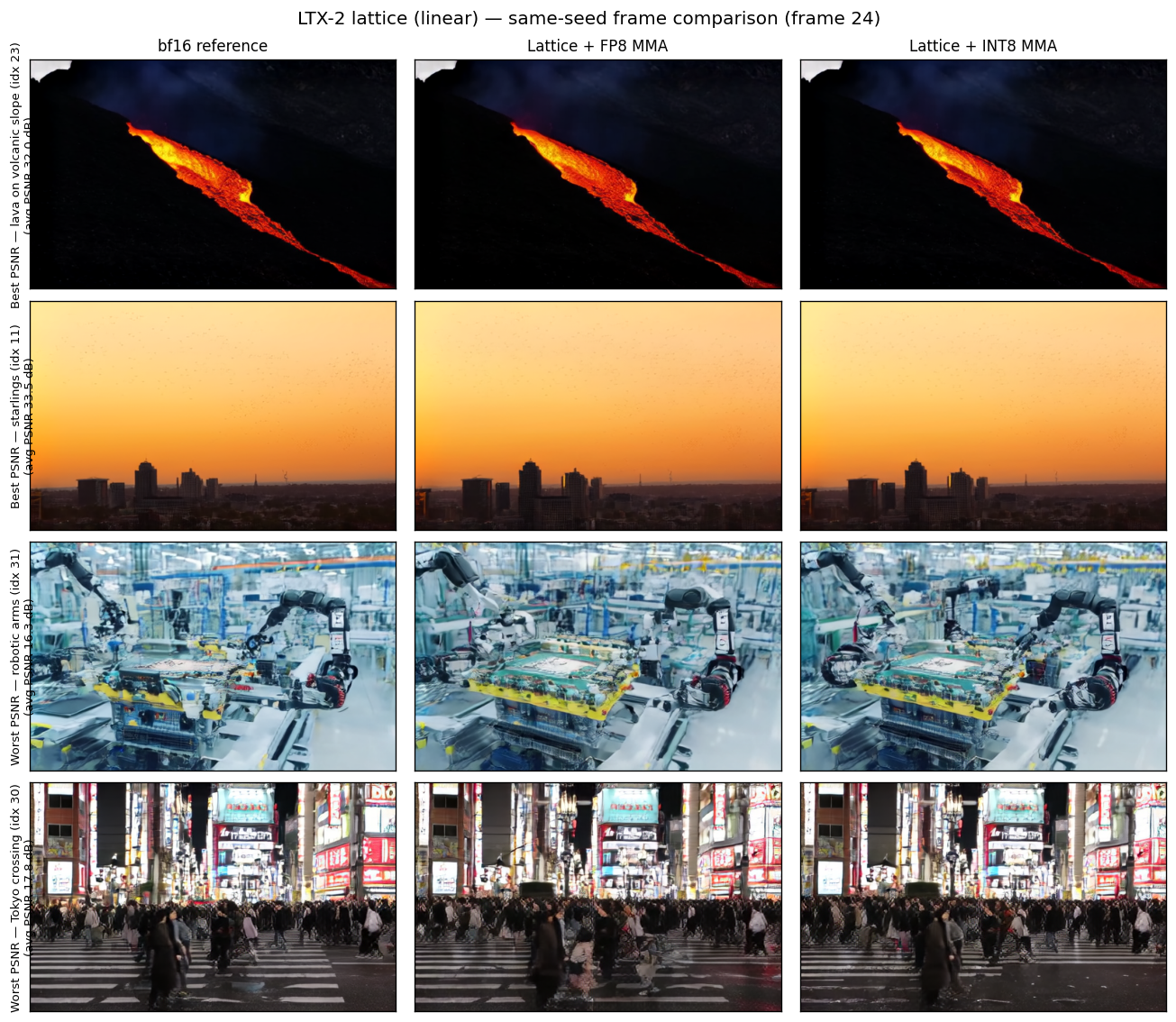}
\caption{Sample frames from \method on LTX-2: \bff baseline (top),
\intEight + lattice (middle), and per-frame error map (bottom). No
visible artefacts; per-frame PSNR is flat across the 49-frame
window.}\label{fig:ltx-frames}
\end{figure}

\begin{table}[!ht]
\centering
\small
\begin{tabular}{l c c c}
\toprule
Config             & PSNR (dB) $\uparrow$ & SSIM $\uparrow$ & LPIPS $\downarrow$ \\
\midrule
\bff baseline      & ($\infty$)           & $1.000$         & $0$ \\
\method + \fpEight MMA  & $22.04$              & $0.8068$        & $0.2144$ \\
\method + \intEight MMA & $\mathbf{22.74}$     & $\mathbf{0.8172}$ & $\mathbf{0.2008}$ \\
\bottomrule
\end{tabular}
\caption{LTX-2-19B quality under \method, 32-prompt evaluation. The
\intEight MMA path is better than \fpEight on every metric at identical
4~bps.}\label{tab:ltx-quality}
\end{table}

\intEight MMA achieves PSNR $22.74$~dB, SSIM $0.8172$, and LPIPS $0.2008$
at 4~bps, improving on the \fpEight variant on every metric. The
\intEight-beats-\fpEight result from the LLM experiments (\Cref{sec:results-fp8})
thus replicates on a much larger model in a qualitatively different
domain. Weight memory shrinks $35.16 \to 9.5$~GiB ($3.7\times$);
generation wall-clock is slightly slower (\fpEight $254.0$~s vs.\ \bff
$209.7$~s) because the pseudo-quantization harness adds a decode pass
without exercising MMA acceleration. As on the LLM
(\Cref{sec:results-throughput}), \method's hardware win is the
$3.7\times$ weight-memory reduction, not wall-clock.

\paragraph{Per-frame analysis.} Per-frame PSNR is essentially constant
across the 49-frame window (\Cref{fig:ltx-frames}): quantization noise
does not compound through the DiT's temporal conditioning. The low
absolute PSNR (22--23~dB) reflects the natural posterior divergence of
any diffusion model under perturbation, not visible artefacts.

\paragraph{Part II: Deployment.}
\subsection{End-to-end throughput and memory}\label{sec:results-throughput}

\Cref{tab:throughput} reports end-to-end Llama-3.1-8B throughput and
resident memory on a single H100 (decode: autoregressive $M{=}1$; prefill:
one $2048$-token forward), together with the weight and KV compression.
\method compresses the linear weights $3.9\times$, cutting full-model
resident memory $\sim\!2.8\times$ ($14.96\!\to\!5.29$~GiB). The
full-model factor trails the $3.9\times$ weight factor because the token
embeddings and \texttt{lm\_head} are kept in \bff: that $5.29$~GiB is
$3.32$~GiB of compressed linear weights plus $1.96$~GiB of \bff
embeddings/head. For much larger models we expect this difference between the compression factors to shrink significantly.

For the KV cache the \emph{resident} column (measured before any generation
begins) does not change, because the cache is empty at model-load time.
The KV savings materialize during generation: E8 lattice codes are stored
as a variable-length Rice-coded bitstream plus a \texttt{float16} per-vector
L2 norm, yielding $\sim\!3.8\times$ actual GPU memory reduction per cached
token ($0.516$~bytes/scalar vs.\ $2$~bytes/scalar for \bff; saving
${\sim}0.09$~GiB per $1{,}024$ tokens and ${\sim}2.9$~GiB per
$32{,}768$-token context). The gap between the $3.8\times$ actual figure and the
$4\times$ theoretic value is metadata overhead
(bit-offset table and Rice-$k$ entry per stream).

Neither path is a throughput win on this hardware. For the weight path,
a warp-specialized fused decode$+$GEMV kernel reduces per-layer
memory traffic from ${\sim}4.5$~B/scalar (read bitstream $+$ write scratch
$+$ read scratch by cuBLAS) to ${\sim}2.5$~B/scalar (bitstream read only,
$x$ in shared memory), yielding ${\sim}1.5\times$ decode speedup on the
weight-quantized path. For the KV path, the past bitstream is maintained
as a single contiguous tensor and decoded with one kernel call per role per
layer; the remaining overhead is the $O(T)$ per-step re-decode of all past
tokens and the QJL inverse rotation ($O(TD^2)$ per layer). Eliminating these requires the kernel directions in
\Cref{sec:conclusion}.


\begin{table}[!ht]
\centering\small
\begin{tabular}{l c c c c c}
\toprule
Config & prefill (tok/s) & decode (tok/s) & resident (GiB) & weight cmp. & KV cmp. \\
\midrule
\bff baseline & $16{,}505$ & $51.8$ & $14.96$ & -- & -- \\
\method W (4 bps) & $3{,}915$  & $7.8$  & $5.29$ & $3.9\times$ & -- \\
\method KV (4 bps)  & $16{,}261$ & $10.8$ & $14.99$ & -- & $3.79\times$ \\
\method W$+$KV & $4{,}655$  & $5.4$  & $5.69$ & $3.9\times$ & $3.79\times$ \\
\bottomrule
\end{tabular}
\caption{End-to-end Llama-3.1-8B-Instruct on one H100, \bff base, all at
$4$~bps. \method compresses the linear weights $3.9\times$ (full-model
resident $2.8\times$, $14.96\!\to\!5.29$~GiB) and stores the KV cache as
variable-length Rice bitstreams, at a throughput cost from
the per-forward weight decode and the $O(T)$ per-step KV
re-decode.}\label{tab:throughput}
\end{table}


\paragraph{Part III: Comparison to prior work.}
\subsection{Comparison to prior quantization schemes}
\label{sec:results-comparison}

We compare \method against the strongest published codecs in each
setting: HIGGS for weights (\Cref{sec:cmp-higgs}) and
TurboQuant/OCTOPUS for the KV cache (\Cref{sec:cmp-octopus}).

\subsubsection{Weights: \method vs.\ HIGGS}
\label{sec:cmp-higgs}

We compare the \method weight path against HIGGS
\citep{malinovskii2025higgs} at matched bit-rates from $3$ to $5$ bps.
HIGGS runs at its native fixed rates (3, 3.5, 4, 4.5, 5~bps for
$p{=}2$; 3, 4, 5~bps for $p{=}1$), with half-integer points using a
Lloyd-trained codebook of size $\sqrt{2}\cdot 2^b$
\citep{pages2003optimal}; \method runs continuously via Rice.
\Cref{fig:weights-bps} shows the headline result: every \method
lattice beats HIGGS-$p2$ at every rate.

\begin{figure}[!ht]
\centering
\includegraphics[width=0.9\linewidth]{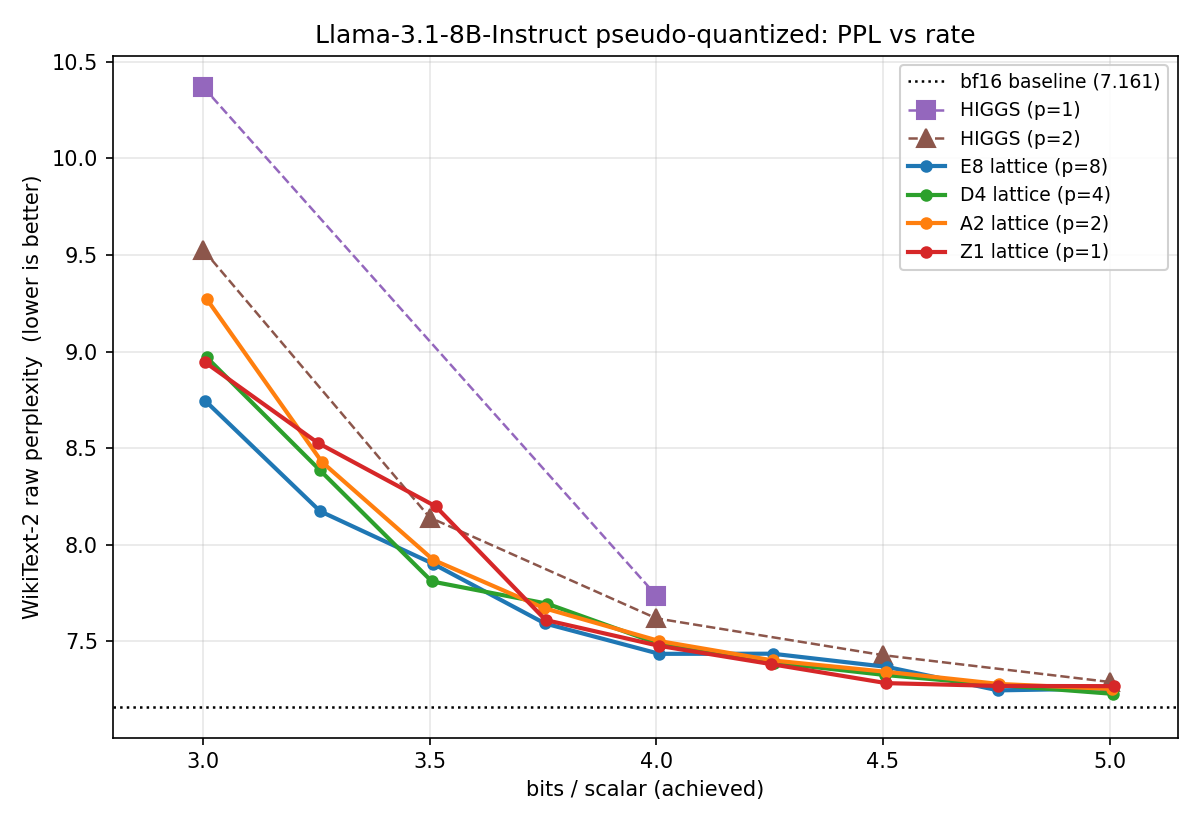}
\caption{Llama-3.1-8B WikiText-2 PPL versus bits per scalar for
\method (lattices $\Eone$, $\Dfour$, $\Atwo$, scalar $\Z$) and
HIGGS ($p\in\{1, 2\}$). \method significantly outperforms HIGGS at every bps.
The four lattices cluster together at $b \ge 4.25$ because their
asymptotic $G(\Latt)$ values are within $0.5$~dB.}
\label{fig:weights-bps}
\end{figure}

\paragraph{A dimension-matched comparison.} HIGGS-$p2$ is a
Lloyd-optimal \emph{two-dimensional} codebook, so the fair
head-to-head is against \method's two-dimensional lattice $\Atwo$
(not the higher-dimensional $\Eone$, which we return to below). Even
at matched dimension, $\Atwo$ wins at every rate:
\begin{center}\small
\begin{tabular}{l c c c c c}
\toprule
bps             & 3.0  & 3.5  & 4.0  & 4.5  & 5.0 \\
\midrule
HIGGS-$p2$ PPL        & 9.527 & 8.140 & 7.618 & 7.427 & 7.288 \\
\method ($\Atwo$) PPL & \textbf{9.273} & \textbf{7.921} & \textbf{7.500}
& \textbf{7.341} & \textbf{7.252} \\
$\Delta$ PPL          & $-0.25$ & $-0.22$ & $-0.12$ & $-0.09$ & $-0.04$ \\
\bottomrule
\end{tabular}
\end{center}

\paragraph{Where the gap comes from.} HIGGS uses a \emph{finite}
codebook of $N = 2^{pb}$ codewords at a fixed $\log_2 N$ bits per
index; \method uses an \emph{unbounded} integer lattice with a
variable-length Rice code, allowing the codebook to extend to
infinity at finite expected rate~\citep{zamir2014lattice}.
Converting rate slack to SNR at the high-rate Gaussian slope
($6.02$~dB/bps) splits the $\Atwo$-vs-HIGGS-$p2$ gap into two
separable pieces (\Cref{tab:gap-decomp}):
\begin{itemize}[leftmargin=*,itemsep=2pt]
  \item \emph{Index-entropy piece.} HIGGS spends $\log_2 N$ bits per
        index even though its index histogram has lower entropy;
        Rice coding recovers this slack (\Cref{fig:entropy-gain}).
        It dominates at low rate ($0.61$~dB at 3~bps) but shrinks
        as the Lloyd histogram becomes more uniform ($0.18$~dB at
        5~bps).
  \item \emph{Unbounded-codebook piece.} A finite codebook must
        stretch its outermost cells to cover the Gaussian tails; the
        lattice has no boundary cell, so a tail outlier merely
        produces a larger-integer code that costs proportionally
        more bits. This residual \emph{grows} with rate
        ($0.13 \to 1.79$~dB from 3~bps to 5~bps) and is
        realizable only because variable-length coding lets the
        codebook be unbounded in the first place.
\end{itemize}

\begin{figure}[!ht]
\centering
\begin{subfigure}{0.49\linewidth}
  \includegraphics[width=\linewidth]{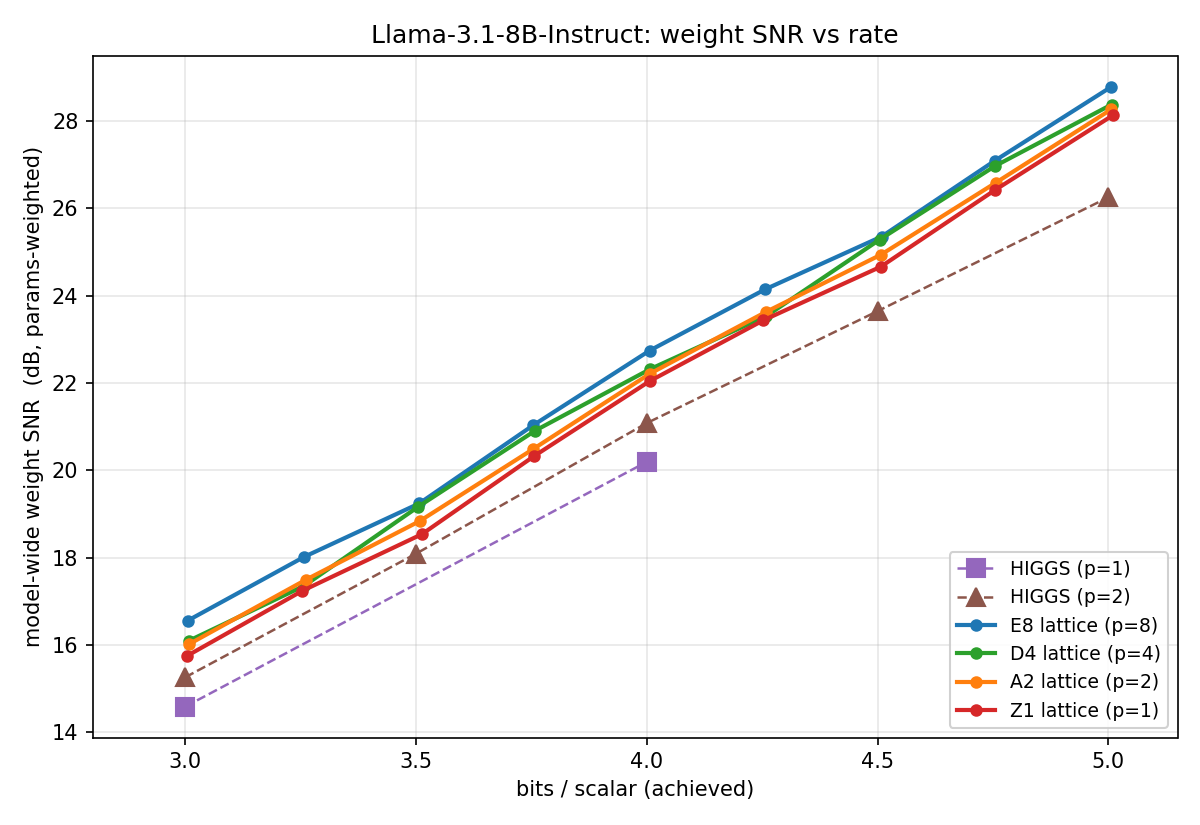}
  \caption{Weight SNR (dB) vs.\ bps. \method dominates by
  $1.3$--$2.5$~dB across the entire range.}
  \label{fig:weights-snr-bps}
\end{subfigure}\hfill
\begin{subfigure}{0.49\linewidth}
  \includegraphics[width=\linewidth]{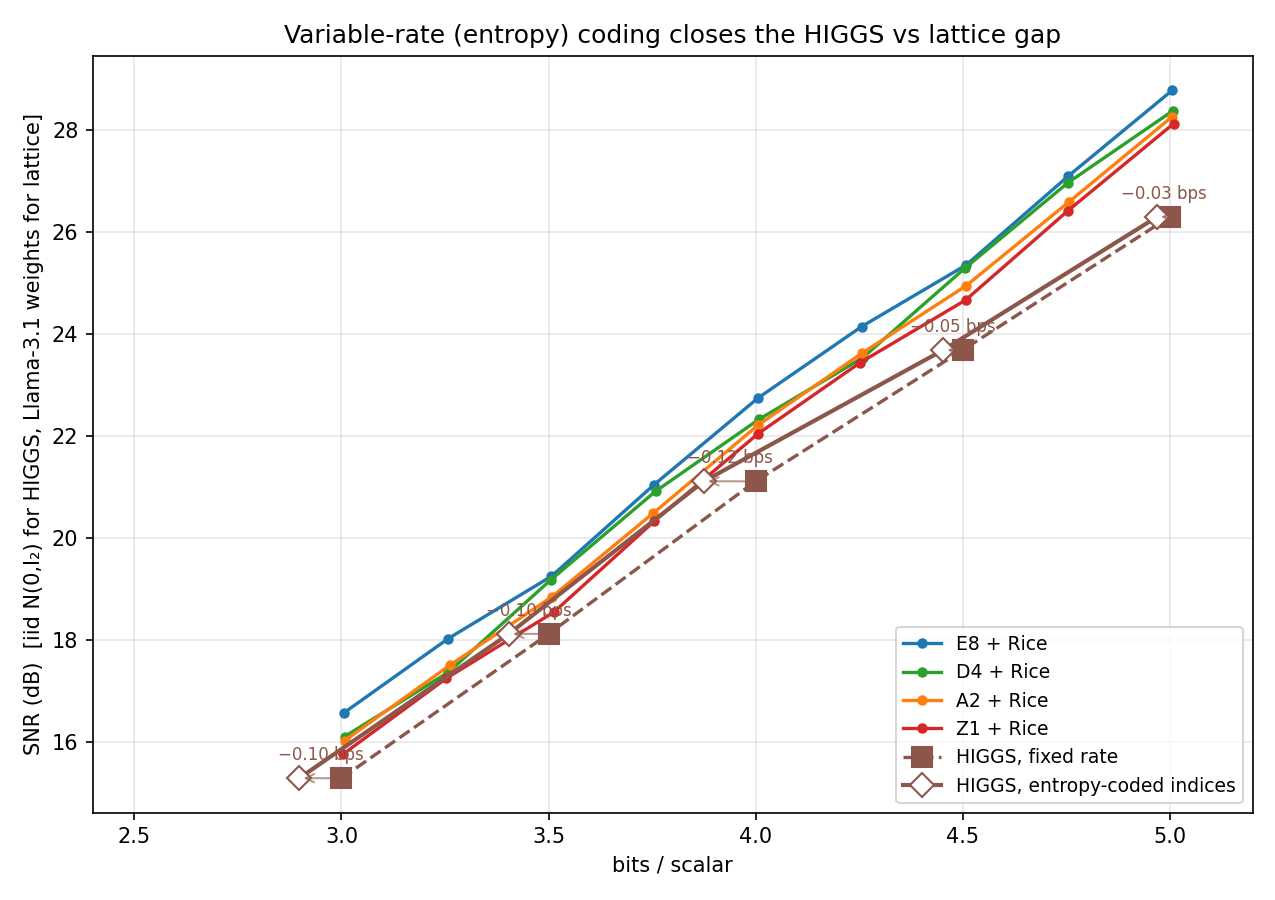}
  \caption{Entropy slack: fixed-rate budget minus the empirical
  index entropy of the HIGGS codebook. The lattice + Rice path
  closes this slack by entropy coding.}
  \label{fig:entropy-gain}
\end{subfigure}
\caption{The two components of the \method-vs-HIGGS gap.}
\end{figure}

\begin{table}[!ht]
\centering\small
\begin{tabular}{c c c c c c}
\toprule
bps & HIGGS-$p2$ SNR & $\Atwo$+Rice SNR & total $\Delta$ & entropy-coding piece & unbounded-codebook piece \\
\midrule
3.0 & 15.28 dB & 16.02 dB & 0.74 dB & 0.61 dB & \textbf{0.13 dB} \\
4.0 & 21.10 dB & 22.21 dB & 1.11 dB & 0.75 dB & \textbf{0.36 dB} \\
5.0 & 26.29 dB & 28.26 dB & 1.97 dB & 0.18 dB & \textbf{1.79 dB} \\
\bottomrule
\end{tabular}
\caption{Dimension-matched decomposition of the $\Atwo$-vs-HIGGS-$p2$
SNR gap (both 2-D). The ``entropy-coding piece'' is the rate slack
HIGGS would recover by re-encoding its existing index histogram with
a variable-length code, converted to dB at the local $6.02\,\dB$/bps
Gaussian R-D slope; it dominates at low rate. The
``unbounded-codebook piece'' is the residual that even an
entropy-coded HIGGS cannot recover, realizable only because the Rice
code lets the codebook be unbounded; it dominates at high
rate.}\label{tab:gap-decomp}
\end{table}

\paragraph{Higher-dimensional lattices.}
The $\Atwo$ comparison is deliberately conservative. HIGGS decodes by
table lookup, so its $2^{pb}$ codewords must fit in GPU shared
memory, which caps it at $p\in\{1,2\}$ in practice (a \bff $p{=}4$,
4~bps table already needs 512kiB). \method decodes
\emph{algebraically} with an $O(n)$ closest-point step and pays no
memory penalty for dimension, so it can use $\Dfour$ and $\Eone$.
Their lower Voronoi second moments, $G(\Eone)$ is only
$0.88\,\dB$ above the Shannon bound versus $\approx 1.3\,\dB$ for any
2-D grid (\Cref{tab:lattice-constants}), widen the SNR advantage to
$1.3$--$2.5\,\dB$ for $\Eone$ (\Cref{fig:weights-snr-bps}): pure
granular gain that is structurally unavailable to HIGGS. Concretely,
$\Eone$ drives weight-path PPL down to $8.744$ at 3~bps and $7.434$
at 4~bps (\bff baseline $7.161$), improving on the dimension-matched
$\Atwo$ ($9.273$ / $7.500$) by $0.53$ / $0.07$ PPL and on HIGGS-$p2$
($9.527$ / $7.618$) by $0.78$ / $0.18$ PPL, the margin largest in
the low-rate regime where granular gain dominates.



\subsubsection{KV cache: \method vs.\ TurboQuant / OCTOPUS}
\label{sec:cmp-octopus}

OCTOPUS~\citep{octopus2025} reports a KV-codec comparison (vs.\
TurboQuant and PolarQuant) on Qwen2.5-7B-Instruct-1M at context
$4096$ with symmetric $K{=}V$, measuring WikiText-2 and C4 PPL. We
reproduce that exact setting and match OCTOPUS's two bias-correction variants.
First, OCTOPUS notes a stability prerequisite: K-side protection on
the outer transformer blocks at each end. We confirm it independently:
with all K/V tiles quantized, \method (and any per-vector codec)
diverges on Qwen2.5-1M even at 4~bps ($\mathrm{PPL}\!\sim\!3500$)
despite $\sim\!22$~dB per-vector SNR. Keeping those two K tiles in
\bff{} (counted at $16$ bits in the rate) restores near-lossless
behavior. Second, OCTOPUS uses a $32$-token residual window (recent
K/V kept exact); we report \method both with and without it. We
implement the window per-query and charge its rate exactly:
at $W{=}32$, $T{=}4096$, it costs $\approx\!0.1$~bps and trims
$\mathrm{KV}\times$ only marginally. Because absolute WikiText-2
baselines differ between harnesses (ours $7.25$ vs.\ OCTOPUS's
$10.03$; C4 baselines agree to within $5\%$), the comparison is on
$\Delta\%$ relative to each method's own \bff{} baseline and on the
true compression $\mathrm{KV}\times = 16/\text{effective-bps}$.

\begin{table}[!ht]
\centering\small
\begin{tabular}{c l c c c c c}
\toprule
nom.\ bits & codec & corr. & res.\ win. & W2 $\Delta\%\downarrow$ & C4 $\Delta\%\downarrow$ & $\mathrm{KV}\times\uparrow$ \\
\midrule
\multirow{8}{*}{4\medskip}
  & TurboQuant-MSE & none & 32 & $+3.1$ & $+1.7$ & $2.2$ \\
  & TurboQuant-QJL & qjl  & 32 & $+8.0$ & $+7.9$ & $2.2$ \\
  & OCTOPUS        & none & 32 & $+2.7$ & $+1.5$ & $2.2$ \\
  & OCTOPUS-QJL    & qjl  & 32 & $+2.7$ & $+1.5$ & $2.0$ \\
  \rowcolor{blue!8} & \method        & none & -- & $+0.8$ & $+1.0$ & $3.7$ \\
  \rowcolor{blue!8} & \method        & qjl  & -- & $+1.4$ & $+1.0$ & $3.6$ \\
  \rowcolor{blue!8} & \method        & none & 32 & $\mathbf{+0.1}$ & $\mathbf{+0.2}$ & $3.6$ \\
  \rowcolor{blue!8} & \method        & qjl  & 32 & $+0.2$ & $+0.3$ & $3.5$ \\
\midrule
\multirow{8}{*}{3\medskip}
  & TurboQuant-MSE & none & 32 & $+8.6$  & $+8.3$  & $2.6$ \\
  & TurboQuant-QJL & qjl  & 32 & $+50.4$ & $+59.9$ & $2.5$ \\
  & OCTOPUS        & none & 32 & $+7.2$  & $+5.9$  & $2.5$ \\
  & OCTOPUS-QJL    & qjl  & 32 & $+7.2$  & $+6.1$  & $2.3$ \\
  \rowcolor{blue!8} & \method        & none & -- & $+5.5$  & $+5.7$  & $4.8$ \\
  \rowcolor{blue!8} & \method        & qjl  & -- & $+4.8$  & $+6.5$  & $4.6$ \\
  \rowcolor{blue!8} & \method        & none & 32 & $+1.8$  & $\mathbf{+1.4}$ & $4.6$ \\
  \rowcolor{blue!8} & \method        & qjl  & 32 & $\mathbf{+1.6}$ & $+1.5$ & $4.5$ \\
\midrule
\multirow{8}{*}{2\medskip}
  & TurboQuant-MSE & none & 32 & $+63.0$  & $+77.4$   & $3.0$ \\
  & TurboQuant-QJL & qjl  & 32 & $+772.0$ & $+1349.0$ & $3.0$ \\
  & OCTOPUS        & none & 32 & $+34.7$  & $+41.5$   & $2.9$ \\
  & OCTOPUS-QJL    & qjl  & 32 & $+34.7$  & $+41.4$   & $2.6$ \\
  \rowcolor{blue!8} & \method        & none & -- & $+42.0$  & $+54.3$   & $6.6$ \\
  \rowcolor{blue!8} & \method        & qjl  & -- & $+44.0$  & $+53.7$   & $6.4$ \\
  \rowcolor{blue!8} & \method        & none & 32 & $\mathbf{+7.4}$ & $\mathbf{+8.1}$ & $6.4$ \\
  \rowcolor{blue!8} & \method        & qjl  & 32 & $+14.7$  & $+15.2$   & $6.1$ \\
\midrule
 \rowcolor{blue!8} 1.7 & \method   & none & 32 & $+26.9$ & $+33.7$ & $7.1$ \\
\bottomrule
\end{tabular}
\caption{\method KV-only vs.\ OCTOPUS/TurboQuant on
Qwen2.5-7B-Instruct-1M (context $4096$, symmetric $K{=}V$). For each prior
codec we report both its no-bias-correction baseline (TurboQuant-MSE / OCTOPUS,
\texttt{none}) and its 1-bit-JL-residual variant (TurboQuant-QJL / OCTOPUS-QJL,
\texttt{qjl}), and \method both with and without the $32$-token residual window.
OCTOPUS/TurboQuant rows and their $\mathrm{KV}\times$ are from
\citep[Table~2]{octopus2025} and natively include the $32$-token window plus
K-side outer-block protection; \method rows are this work (WikiText-2 over $72$
windows, C4 \texttt{en}/validation over $39$ windows). $\mathrm{KV}\times$ is the
true compression (effective bits include the \bff-protected tiles and the
residual window). Bold marks the best $\Delta\%$ per bit-width
block.}\label{tab:octopus}
\end{table}

\Cref{tab:octopus} supports three conclusions.
\emph{(i) Matched bias-correction.} At matched scheme, \method's \texttt{qjl}
beats OCTOPUS-QJL ($+0.2\%$ vs.\ $+2.7\%$ at $4$ bits) and \texttt{none}
beats native OCTOPUS at every rate; the within-\method spread mirrors the
rotation-inversion at low bps.
\emph{(ii) Matched residual window.} With the same $32$-token window, \method
wins on both quality and compression at every operating point ($+7.4\%$ vs.\
OCTOPUS's $+34.7\%$ at $2$ bits, $\mathrm{KV}\times 6.4$ vs.\ $2.9$). The
window is decisive at $2$ bits: without it OCTOPUS leads on quality
($+34.7\%$ vs.\ $+42.0\%$), but it costs only $\approx0.1$ bps
(\Cref{tab:octopus-ablation}).
\emph{(iii) Compression.} The comparison is not compression-matched: OCTOPUS's
$2$-bit point uses $\approx5.5$ effective bits ($\mathrm{KV}\times2.9$) from
per-triplet norm overhead, vs.\ \method's $\approx2.5$ ($\mathrm{KV}\times6.4$).
On a compression-matched basis the advantage widens further, reaching
$\mathrm{KV}\times7.1$ at $1.7$~bps where OCTOPUS tops out near $3.0\times$.

\begin{table}[!ht]
\centering\small
\begin{tabular}{c c c c}
\toprule
bps & $\Delta\%$ (no window) & $\Delta\%$ (window $32$) & $\mathrm{KV}\times$ \\
\midrule
4   & $+0.8$   & $\mathbf{+0.1}$  & $3.7\to3.6$ \\
3   & $+5.5$   & $\mathbf{+1.8}$  & $4.8\to4.6$ \\
2   & $+42.0$  & $\mathbf{+7.4}$  & $6.6\to6.4$ \\
1.7 & $+325.1$ & $\mathbf{+26.9}$ & $7.5\to7.1$ \\
\bottomrule
\end{tabular}
\caption{Effect of the $32$-token residual window on \method
(\texttt{none}, WikiText-2 $\Delta\%$). The window adds $\approx0.1$
bps and so trims $\mathrm{KV}\times$ slightly, but the quality gain
grows sharply as the rate falls, exactly where recent-token
fidelity matters most.}\label{tab:octopus-ablation}
\end{table}

%% file: sections/ablation.tex
\section{Ablation study}\label{sec:ablation}

This section isolates the contribution of each \method component on
Llama-3.1-8B and identifies the parameters that materially move
quality. We organize by component; ablations we did not run are
folded into the future directions of \Cref{sec:conclusion}.

\subsection{Lattice choice}\label{sec:ablation-lattice}

We compare $\Z$, $\Atwo$, $\Dfour$, $\Eone$ on the weight path
(\Cref{fig:weights-bps}):

\begin{center}\small
\begin{tabular}{l c c c c}
\toprule
bps   & PPL($\Eone$) & PPL($\Dfour$) & PPL($\Atwo$) & PPL($\Z$) \\
\midrule
$3.0$ & $\mathbf{8.744}$ & $8.835$ & $8.973$ & $9.318$ \\
$4.0$ & $\mathbf{7.434}$ & $7.435$ & $7.466$ & $7.527$ \\
$5.0$ & $7.236$ & $\mathbf{7.227}$ & $7.249$ & $7.252$ \\
\bottomrule
\end{tabular}
\end{center}

Two practical conclusions:
\begin{itemize}[leftmargin=*,itemsep=2pt]
  \item Above 4.25~bps, all four lattices are essentially equivalent;
    use whichever has the simplest decoder ($\Z$ scalar is fine).
  \item Below 3.5~bps, $\Eone$'s additional granular gain becomes
    meaningful (up to $0.57$ PPL over $\Z$ at 3~bps): the only
    regime where the lattice choice has practical purchase.
\end{itemize}

\subsection{RHT tile size}\label{sec:ablation-hadamard}

The RHT tile is the block length over which we apply the RHT and
\texttt{amax}-scale before lattice quantization. We sweep it over
$\{128, 256, 512, 1024, 2048\}$ on Llama-3.1-8B with $\Eoneint$ at
3 and 4~bps on the weight, KV, and joint W$+$KV paths
(\Cref{fig:rht-tile}; WikiText-2 PPL over $141$ windows, \bff{}
baseline $7.161$). To separate a genuine tile effect from the
seed-to-seed noise of the random rotation, we repeat the sweep over
four independent RHT seeds and report the across-seed mean $\pm 1$
standard deviation. The SNR is calibrated once on iid-Gaussian data
and held fixed across tiles; the realized rate stays within $\pm
0.01$~bps of target, with larger tiles coding marginally fewer bits
($\approx\!0.005$~bps) by tightening the post-RHT Gaussian fit.

\begin{figure}[t]
\centering
\includegraphics[width=\linewidth]{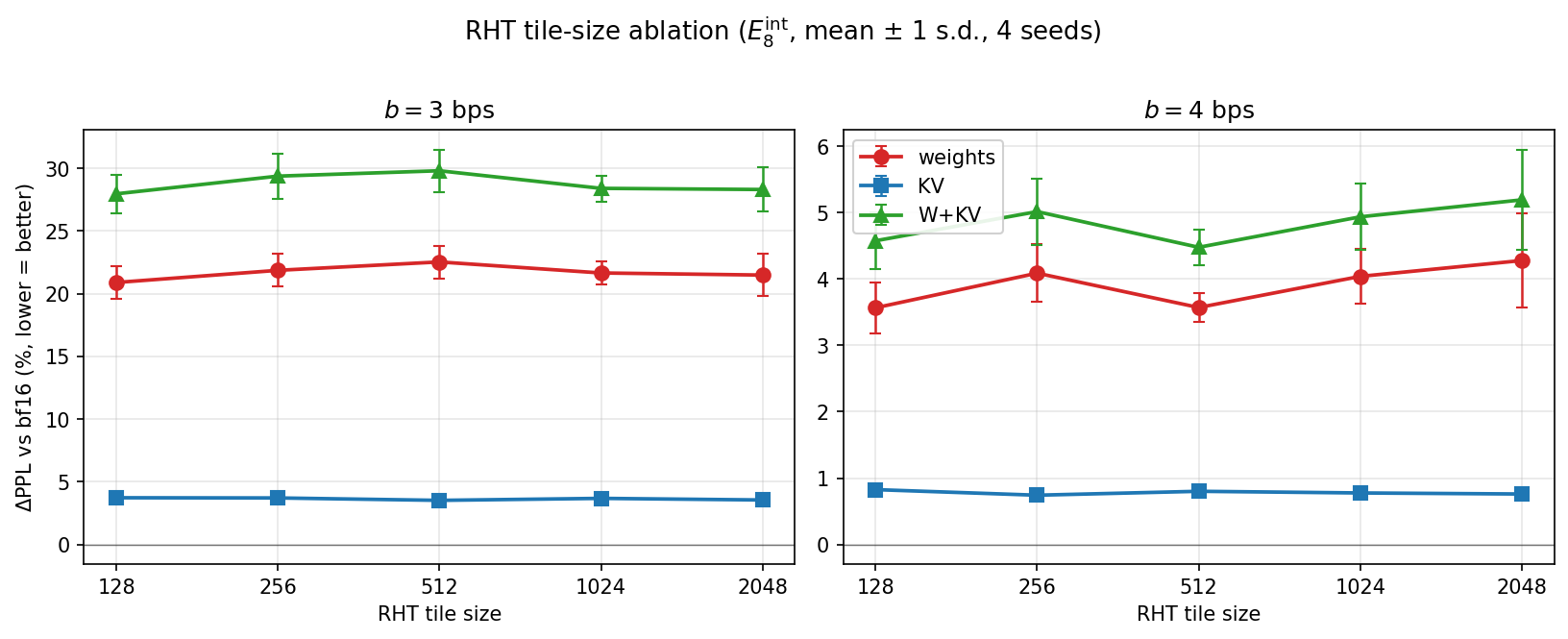}
\caption{PPL vs.\ RHT tile size on Llama-3.1-8B ($\Eoneint$, no MMA
cast, true RHT), as $\Delta$PPL relative to the \bff{} baseline for the
weight, KV, and joint W$+$KV paths at 3~bps (left) and 4~bps
(right). Markers are the mean over four RHT seeds; error bars are $\pm 1$
standard deviation. KV is nearly tile-invariant with negligible seed
variance, while for the weight and W$+$KV paths the per-tile differences
fall within the seed error bars at both rates, i.e.\ tile size is not
a quality lever. W$+$KV tracks the sum of the two independent
paths.}\label{fig:rht-tile}
\end{figure}

Two conclusions:
\begin{itemize}[leftmargin=*,itemsep=2pt]
  \item \textbf{KV is essentially tile-insensitive}, and this is the one
    rock-solid effect: the per-tile means span only $3.5$--$3.7\%$ at
    3~bps and $0.74$--$0.83\%$ at 4~bps, with tiny seed variance
    ($\le 0.24$ pp, mostly $<0.1$). KV vectors are low-dimensional and
    well-conditioned per head, so the RHT tile size barely
    matters once the tile covers a head or more. Per-head RHT
    (tile $=128$) is a fine, cheap default.
  \item \textbf{For weights and weights$+$KV, tile size is not a quality
    lever.} Across seeds the weight path stays at $\approx 21$--$22.5\%$
    at 3~bps and $\approx 3.6$--$4.3\%$ at 4~bps, with per-tile gaps
    ($\le 1.6$ pp) that are smaller than the $\pm 1$ s.d.\ seed noise
    ($\approx 1.3$ pp at 3~bps, $0.2$--$0.7$ pp at 4~bps); W$+$KV
    behaves the same way. Apparent ``best'' tiles from any single basis
    (e.g.\ a deterministic Hadamard, or one random seed) do not survive
    averaging over rotations, so the marginally better Gaussianization
    and lower rate of a longer transform do not translate into a
    reproducible PPL gain.
\end{itemize}

The W$+$KV damage tracks the sum of the independent paths (e.g.\ at
4~bps, tile $128$: weights $+3.57\%$ and KV $+0.83\%$ compose to
W$+$KV $+4.58\%$), confirming the two error sources are roughly additive
in PPL. Because the tile has no reproducible effect on quality, we set
it on kernel-efficiency grounds: the default tile of $128$ matches the
MMA $K$-dim and is therefore preferred for downstream kernel fusion at
no measurable accuracy cost.

\subsection{HIGGS-codebook efficiency analysis}\label{sec:ablation-higgs}

To corroborate the rate-gain decomposition in
\Cref{sec:results-weights}, we measured HIGGS's empirical codebook
index histograms on Llama-3.1-8B weights
(\Cref{fig:higgs-grid-usage}). The Lloyd grids exhibit clear
non-uniform usage: the most-used codeword is consistently $10$--$22\times$
more frequent than the least-used, and the empirical index entropy is
$0.6$--$5.9\%$ below the fixed-rate budget $\log_2 N$ (equivalently,
$94$--$99\%$ coding efficiency). This is
\emph{not} a bug in HIGGS; it is the expected behavior of any
finite codebook on a smooth distribution. It \emph{is} however a
recoverable bit-rate gap, which Rice coding closes.

\begin{figure}[t]
\centering
\begin{subfigure}{0.49\linewidth}
  \includegraphics[width=\linewidth]{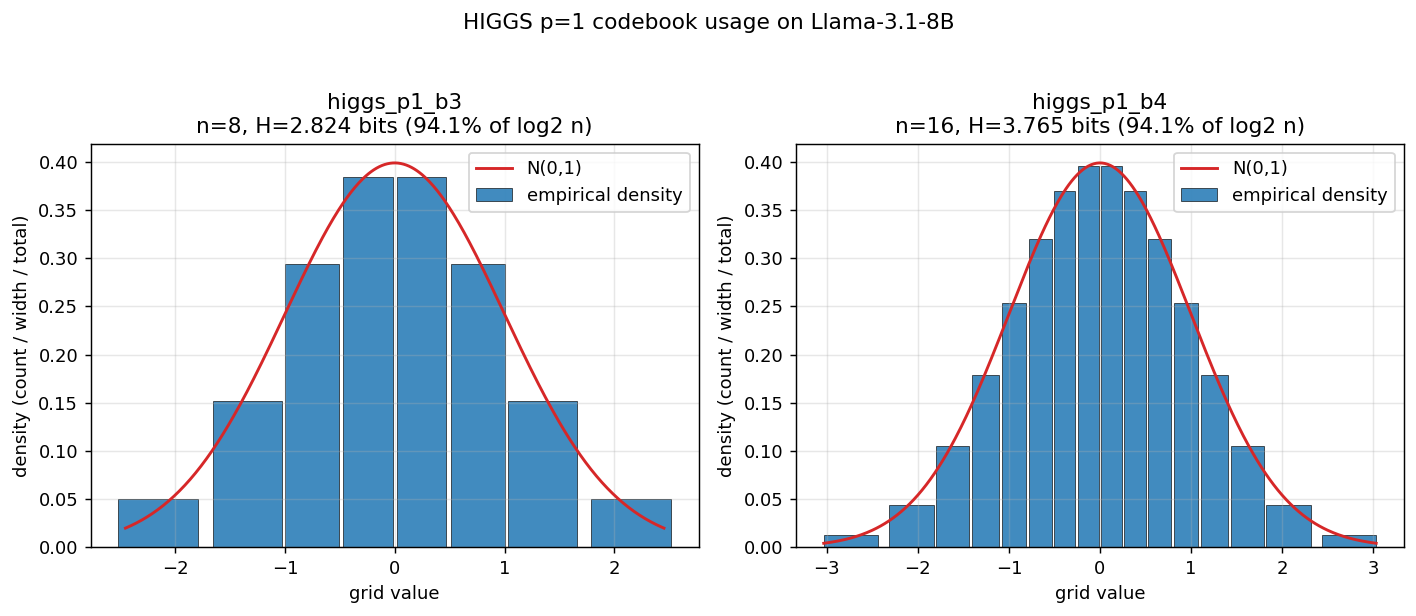}
  \caption{Index frequency for HIGGS $p{=}1$, 3~bps.}
\end{subfigure}\hfill
\begin{subfigure}{0.49\linewidth}
  \includegraphics[width=\linewidth]{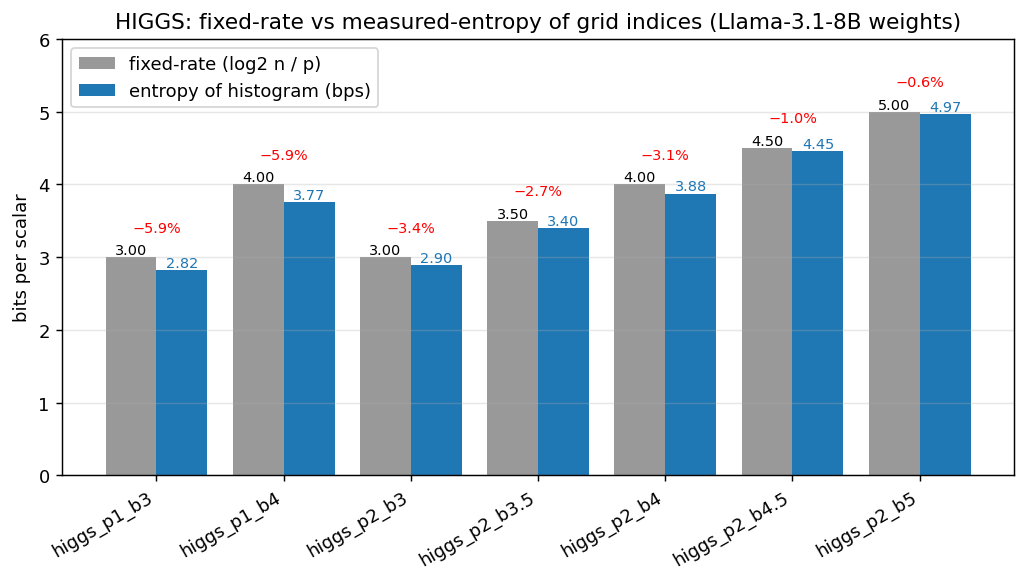}
  \caption{Empirical entropy efficiency $H/b$ for HIGGS codebooks
  at $b\in\{3,\ldots,5\}$, $p\in\{1,2\}$.}
\end{subfigure}
\caption{HIGGS codebook index distributions are non-uniform on
Llama weights, with $0.6$--$5.9\%$ entropy slack relative to the
fixed-rate budget $\log_2 N$ (red labels). \method recovers this
slack via Rice coding.}\label{fig:higgs-grid-usage}
\end{figure}

%% file: sections/conclusion.tex
\section{Conclusion and discussion}\label{sec:conclusion}

We presented \method, a data-free PTQ pipeline that unifies five
ingredients, per-tile RHT, optimal low-dimensional lattice
quantization, lossless bit-stripping, Rice entropy coding, and
Schuchman-Zamir-Feder subtractive dither, into a single recipe for
both the weights and the KV cache of modern transformers. The pipeline
plugs into Hopper's 8-bit and Blackwell's 4-bit MMA paths via a
per-tile \fpEight/\intEight cast, which is near-optimal once the RHT has
Gaussianized each tile's coordinates.

\paragraph{Summary of empirical findings.}
\begin{itemize}[leftmargin=*,itemsep=2pt]
  \item \emph{Weight quantization}: \method's $\Eone$+Rice path
    dominates HIGGS at every bps from 3 to 5. The gap
    decomposes into (i) a small index-entropy piece ($0.6$--$0.8$
    dB across the range) that any entropy-coded HIGGS could
    recover, and (ii) a larger structural ``unbounded-codebook''
    piece that even an entropy-coded HIGGS cannot match ($0.67$ dB
    at 3~bps, $0.91$ dB at 4~bps, $2.34$ dB at 5~bps),
    enabled by the variable-length coding that allows the lattice
    to be unbounded.
  \item \emph{KV-cache quantization}: a clean two-regime story
    emerges. Above 2.5~bps all bias-correction choices are equivalent;
    in the high-compression
    regime ($1.7$--$2.5$~bps) QJL rotation pulls ahead by up to
    $\sim 0.5$~PPL. Run head-to-head on OCTOPUS's own Qwen2.5-7B
    protocol, \method beats both TurboQuant and OCTOPUS at matched
    bias correction; with a matched $32$-token residual window it
    wins on \emph{both} quality and compression at every operating
    point ($+7.4\%$ vs.\ OCTOPUS's $+34.7\%$ perplexity at $2$ bits,
    at $\mathrm{KV}\times 6.4$ vs.\ $2.9$), and reaches
    $\mathrm{KV}\times 7.1$ at $1.7$ bps where OCTOPUS tops out near
    $3.0\times$.
  \item \emph{8-bit MMA}: \intEight consistently beats \fpEight on
    post-RHT lattice data by $\sim 0.1$ PPL (LLM) and
    $\sim 0.7$ dB PSNR (LTX-2 video), reversing the conventional
    wisdom that \fpEight is preferred for outlier-heavy distributions:
    post-RHT the distribution is no longer outlier-heavy.
  \item \emph{4-bit MMA}: \nvfp is viable; \mxfp's \Eeightmzero scale
    cannot accommodate KV-cache tails without dither rescue.
  \item \emph{Generalization}: the entire pipeline transfers
    cleanly from an 8B language model to a 19B video DiT.
\end{itemize}

\paragraph{When to use \method.} The defaults in \Cref{tab:params}
deliver $\Delta\ppl \le 0.3$ on Llama-3.1-8B at 4~bps for both
weights and KV with no fine-tuning, no calibration set, and a $\sim
30$-second post-training pass. For workloads where KV-cache memory
is the bottleneck (long-context decoding, batch inference, multi-tenancy)
we recommend 3~bps (KV), which delivers $\sim\!81\%$ KV memory
reduction for $+0.25$ PPL. For aggressive memory-constrained
deployments at 2~bps (KV), enable QJL rotation; below 1.7
bps the operating regime is too noisy for any data-free method we
know, and either calibration-based methods or fine-tuning is needed.

\paragraph{Limitations.}
\begin{enumerate}[leftmargin=*,itemsep=2pt]
  \item \emph{Memory win, not a speedup, on H100.} We measure
    $3.9\times$ weight compression ($2.8\times$ full-model resident)
    and $3.79\times$ KV-cache compression at near-lossless quality
    (\Cref{tab:throughput}), but the per-forward
    variable-length decode \emph{adds} latency rather than removing it,
    because \bff cuBLAS is already near roofline and a Rice stream cannot
    be fed into a tuned MMA mainloop. Turning the rate gain into a
    wall-clock speedup needs the kernel work below.
  \item \emph{Single-bps allocation.} \method currently uses
    uniform bps; the dynamic-programming bit allocator of HIGGS
    can be composed with our entropy code and should help in the
    very low-bit regime.
\end{enumerate}

\paragraph{Future directions.}
The most natural extensions, in order of expected impact:
\begin{enumerate}[leftmargin=*,itemsep=2pt]
  \item \emph{Kernel optimizations toward a throughput win.} Three
    directions build on the offset-indexed decoder
    (\Cref{sec:implementation}): (a)~\emph{intra-stream parallelism}
    via delta-coded sub-offsets, restoring occupancy at $\sim\!1\%$
    metadata overhead; (b)~\emph{warp-specialized fused decode$+$MMA},
    with producer warps decoding tiles into shared memory while
    consumer warps run \texttt{wgmma}, hiding decode under the matmul;
    (c)~\emph{rANS in place of Rice}~\citep{duda2013ans}, removing the
    serial unary scan for $N$-way SIMD decode.
  \item \emph{Close the FP-INT gap with a Gaussian-aware cast.}
    Post-RHT tiles are approximately iid Gaussian and light-tailed,
    making \intEight's uniform levels a better match than \fpEight's logarithmic
    spacing. An \fpEight cast designed for the known post-RHT density
    (e.g.\ companding or an analytic-tail saturation point) should
    close the $\sim\!0.1$~PPL/$\sim\!0.7$~dB gap without calibration,
    since the RHT fixes the marginal distribution data-free.
  \item \emph{Add a calibration pass such as LDLQ.} A one-shot
    LDLQ-style update~\citep{tseng2024quipsharp,savkin2025nestquant}
    adjusting each layer's unquantized weights to absorb prior
    quantization errors should close the residual gap to
    calibration-based methods with only a data-light pass over the
    model.
  \item \emph{Per-layer bit allocation.} Composing HIGGS's
    dynamic-programming allocator~\citep{malinovskii2025higgs} with
    our entropy code should concentrate gains in the very low-bit
    regime.
  \item \emph{Higher-dimensional lattices.} The Leech lattice
    $\Lambda_{24}$ offers a $\sim\!0.4$~dB granular-gain advantage
    over $\Eone$ at the cost of a more expensive decoder, to be
    weighed against its PPL benefit at 3, 3.5, and 4~bps.
\end{enumerate}

%% file: appendix/dither_proof.tex
\section{Proof of subtractive-dither unbiasedness}\label{app:dither_proof}

This appendix gives a self-contained proof that subtractive-dithered
lattice quantization satisfies the Schuchman conditions and is
therefore \emph{exactly} unbiased under inner products on every
realization: for any deterministic query $q$ and source $x$, the
dithered reconstruction $\hat{x}$ satisfies
$\E_U[\inner{q}{\hat x} \mid x] = \inner{q}{x}$, the
expectation being over the dither $U$ alone. We then show the
guarantee survives composition with the full \method KV pipeline,
and contrast it with the weaker approximate unbiasedness of
QJL without dither.

\subsection{Setup and notation}\label{app:dither-setup}

\begin{definition}[Lattice and Voronoi cell]
A lattice $\Latt \subset \R^n$ is a discrete subgroup of $(\R^n, +)$
of full rank $n$, i.e., $\Latt = \{\sum_{i=1}^n k_i b_i : k_i\in\Z\}$
for some $\R$-basis $b_1,\dots,b_n$ of $\R^n$. The Voronoi cell of
$\Latt$ at the origin is
\begin{equation}\label{eq:voronoi-def}
\Vor(\Latt) := \{x \in \R^n : \norm{x} \le \norm{x - \lambda}
\text{ for all } \lambda \in \Latt\}.
\end{equation}
\end{definition}

The Voronoi cell $\Vor(\Latt)$ is a closed convex polytope. It is
centrally symmetric, $\Vor(\Latt) = -\Vor(\Latt)$, because the
defining inequalities \eqref{eq:voronoi-def} are invariant under
$x \mapsto -x$ together with $\lambda \mapsto -\lambda$ (which is a
bijection of $\Latt$). The translates $\{\Vor(\Latt) + \lambda :
\lambda \in \Latt\}$ tile $\R^n$, overlapping only on the
measure-zero boundary $\partial\Vor(\Latt)$.

\begin{definition}[Fundamental domain]
A measurable set $D \subset \R^n$ is a fundamental domain for
$\Latt$ if $\R^n = \bigsqcup_{\lambda \in \Latt} (D + \lambda)$ up
to sets of Lebesgue measure zero.
\end{definition}

By the tiling property, $\Vor(\Latt)$ is itself a fundamental
domain. The covolume of $\Latt$ is $\vol(\Vor(\Latt)) =
|\det(b_1,\dots,b_n)|$.

\begin{definition}[Nearest-neighbor quantizer and mod-$\Latt$
projection]
Define
\[
\Qfn(y) := \arg\min_{\lambda \in \Latt} \norm{y-\lambda},
\qquad
\modproj(y) := y - \Qfn(y),
\]
with a fixed measurable tie-break rule on $\partial\Vor(\Latt)$.
The map $\modproj$ sends $y \in \R^n$ to the unique representative
of $y + \Latt$ in $\Vor(\Latt)$ (uniqueness up to the boundary).
\end{definition}

The map $\modproj$ has two properties we will use repeatedly:
\begin{enumerate}[label=(P\arabic*),leftmargin=*,nosep]
\item \emph{Range.} $\modproj(\R^n) \subseteq
\Vor(\Latt)$.\label{prop:range}
\item \emph{Lattice periodicity.} $\modproj(y+\lambda) =
\modproj(y)$ for all $\lambda \in \Latt$ and $y\in\R^n$, because
$\Qfn(y+\lambda) = \Qfn(y) + \lambda$. Hence $\modproj$ descends
to a well-defined map $\R^n/\Latt \to
\Vor(\Latt)$.\label{prop:periodic}
\end{enumerate}

\subsection{The mod-$\Latt$ pushforward is uniform}

\begin{lemma}\label{lem:pushforward}
Let $D \subset \R^n$ be a fundamental domain of $\Latt$ with finite
positive measure. If $Y \sim \Unif(D)$, then $\modproj(Y) \sim
\Unif(\Vor(\Latt))$.
\end{lemma}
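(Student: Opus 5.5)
The plan is to compute the law of $\modproj(Y)$ directly by testing it against an arbitrary measurable set $A \subseteq \Vor(\Latt)$, and to show $\Pr[\modproj(Y)\in A] = \vol(A)/\vol(\Vor(\Latt))$. First I record that $\vol(D) = \vol(\Vor(\Latt))$. Since $D$ is a fundamental domain, $\Vor(\Latt) = \bigsqcup_{\lambda} \bigl(\Vor(\Latt)\cap(D+\lambda)\bigr)$ up to null sets. Translating each piece by $-\lambda$ gives the pieces $(\Vor(\Latt)-\lambda)\cap D$, and these partition $D$ because the Voronoi translates tile $\R^n$. Translation invariance of Lebesgue measure and countable additivity ($\Latt$ is countable) then give equality of the volumes.

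The main step uses the same cut-and-translate decomposition for the preimage. By \ref{prop:periodic} and \ref{prop:range}, for $y \in D$ with $y$ off the null set of boundary points, we have $\modproj(y) = y - \lambda$ exactly when $y \in \Vor(\Latt)+\lambda$. Hence
\begin{equation*}
\{y\in D : \modproj(y)\in A\} \;=\; \bigsqcup_{\lambda\in\Latt} \bigl(D \cap (A+\lambda)\bigr)
\end{equation*}
up to measure zero. Taking volumes and translating each piece by $-\lambda$ gives $\sum_\lambda \vol\bigl((D-\lambda)\cap A\bigr)$. Since $\{D-\lambda\}_{\lambda}$ also tiles $\R^n$ (as $-\Latt=\Latt$), this sum equals $\vol(A)$. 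Dividing by $\vol(D) = \vol(\Vor(\Latt))$ yields the uniform law.

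The only delicate point is measure-theoretic bookkeeping. I need measurability of $\modproj$, which holds because it is piecewise a translation on the countably many measurable sets $\Vor(\Latt)+\lambda$ with a measurable tie-break. I also need to check that the boundary overlaps and tie-break choices only affect null sets: $\partial\Vor(\Latt)$ is a finite union of hyperplane pieces, hence Lebesgue-null, and countable unions of its translates remain null. I expect this to be routine rather than a genuine obstacle. The heart of the argument is the double use of the two tilings to swap the roles of $D$ and $\Vor(\Latt)$.
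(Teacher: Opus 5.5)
Your proposal is correct and follows the paper's own route: test against a measurable $A\subseteq\Vor(\Latt)$, identify the preimage as $\bigsqcup_{\lambda\in\Latt}(A+\lambda)$, and show that its intersection with $D$ has volume $\vol(A)$. Your step of translating the pieces $D\cap(A+\lambda)$ back to $(D-\lambda)\cap A$ makes the paper's ``tiles $A+\Latt$ exactly once'' claim precise, and you also prove $\vol(D)=\vol(\Vor(\Latt))$ and handle the null-set and tie-break bookkeeping, which the paper uses without comment.
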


\begin{proof}
Let $A\subseteq\Vor(\Latt)$ be measurable. We have
$\Pr[\modproj(Y) \in A] = \vol(\modproj^{-1}(A)\cap D)/\vol(D)$.
By \ref{prop:periodic},
$\modproj^{-1}(A) = \bigsqcup_{\lambda\in\Latt} (A+\lambda)$,
which tiles $A + \Latt$ exactly once when restricted to any
fundamental domain $D$. Hence $\vol(\modproj^{-1}(A)\cap D) =
\vol(A)$, so $\Pr[\modproj(Y) \in A] = \vol(A)/\vol(D) =
\vol(A)/\vol(\Vor(\Latt))$, which is the uniform-on-$\Vor(\Latt)$
probability.
\end{proof}

The lemma has the following ``shift-invariance'' consequence, which
is the engine of the proof.

\begin{corollary}[Crypto Lemma]\label{cor:crypto}
Let $U \sim \Unif(\Vor(\Latt))$. For every deterministic
$x \in \R^n$,
\[
\modproj(x + U) \sim \Unif(\Vor(\Latt)),
\quad\text{independent of $x$.}
\]
\end{corollary}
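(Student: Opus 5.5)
The plan is to reduce the statement to \Cref{lem:pushforward} by showing that $Y := x + U$ is uniform on a fundamental domain of $\Latt$. The natural choice is the translate $D_x := \Vor(\Latt) + x$. Since $U \sim \Unif(\Vor(\Latt))$ and translation by the deterministic vector $x$ preserves Lebesgue measure, we have $x + U \sim \Unif(D_x)$. This follows directly from the change of variables $u \mapsto u + x$, whose Jacobian is one, so $\vol(D_x) = \vol(\Vor(\Latt)) \in (0,\infty)$.

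Next I check that $D_x$ is a fundamental domain. The translates $\{\Vor(\Latt)+\lambda\}_{\lambda\in\Latt}$ tile $\R^n$ up to the measure-zero boundary. Shifting the whole family by $x$ is a measure-preserving bijection of $\R^n$, so $\{D_x + \lambda\}_{\lambda\in\Latt}$ still covers $\R^n$ with overlaps only on $\partial\Vor(\Latt)+x+\Latt$. That set is a countable union of null sets, hence null. So $D_x$ satisfies the fundamental-domain definition.

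Applying \Cref{lem:pushforward} with $D = D_x$ then gives $\modproj(x+U) \sim \Unif(\Vor(\Latt))$ for every fixed $x$. The phrase ``independent of $x$'' has to be read correctly. The law does not depend on $x$, and if $x$ is itself random but independent of $U$, then conditioning on $x$ gives the same uniform law for every value. The conditional law is therefore constant, so $\modproj(x+U)$ is independent of $x$. I will state this conditioning step explicitly, since it is what the subsequent inner-product unbiasedness argument needs.

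The only delicate point is the tie-break rule on $\partial\Vor(\Latt)$, together with measurability of $\modproj$. Both are harmless: the boundary has Lebesgue measure zero, and $U$ is absolutely continuous, so $x+U$ hits $\partial\Vor(\Latt)+\Latt$ with probability zero. I expect no real obstacle beyond stating this null-set bookkeeping cleanly.
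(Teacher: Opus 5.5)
Your proposal is correct and follows the paper's proof: translate the Voronoi cell by $x$ to get the fundamental domain $x+\Vor(\Latt)$, on which $x+U$ is uniform, then apply \Cref{lem:pushforward}. Your extra bookkeeping fills in details the paper leaves implicit without changing the argument: the null overlaps of the translated tiling, the tie-break set having probability zero, and the conditional-law reading of ``independent of $x$'' when $x$ is random but independent of $U$.
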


\begin{proof}
Since $\Vor(\Latt)$ is a fundamental domain, so is the translate
$x + \Vor(\Latt)$. The variable $Y := x + U$ has distribution
$\Unif(x+\Vor(\Latt))$, and \Cref{lem:pushforward} applies with
$D = x+\Vor(\Latt)$.
\end{proof}

\subsection{Subtractive dither produces an unbiased estimator}

\begin{definition}[Dithered reconstruction]\label{def:dithered}
Fix $x\in\R^n$ and let $U \sim \Unif(\Vor(\Latt))$ be independent of
any other randomness. The subtractive-dithered reconstruction of $x$
is
\begin{equation}\label{eq:xhat-def}
\hat{x}(x;U) := \Qfn(x+U) - U.
\end{equation}
The quantization error is $e(x;U) := \hat{x}(x;U) - x$.
\end{definition}

\begin{theorem}[Schuchman]\label{thm:schuchman}
With $U$ and $\hat{x}$ as in \Cref{def:dithered}, for every
$x\in\R^n$ the error $e(x;U)$ is distributed as $-U' \sim
\Unif(-\Vor(\Latt))$, independent of $x$. In particular, by central
symmetry, $e(x;U) \sim \Unif(\Vor(\Latt))$ as well, and
\begin{equation}\label{eq:error-mean-zero}
\E_U[e(x;U) \mid x] = 0.
\end{equation}
\end{theorem}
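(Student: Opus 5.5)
The plan is to reduce everything to the Crypto Lemma (\Cref{cor:crypto}) through an algebraic rewrite of the error. By \eqref{eq:xhat-def} and the definition of $\modproj$,
\[
e(x;U) = \Qfn(x+U) - U - x = \Qfn(x+U) - (x+U) = -\modproj(x+U).
\]
So the error is a deterministic function of the single random vector $x+U$. The distributional claim then becomes a statement about $\modproj(x+U)$ alone.

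\Cref{cor:crypto} applies directly, since $x$ is deterministic and $U\sim\Unif(\Vor(\Latt))$. It gives $\modproj(x+U)\sim\Unif(\Vor(\Latt))$, with a law that does not depend on $x$. Negating, $e(x;U)\sim\Unif(-\Vor(\Latt))$, which is the law of $-U'$ for an independent copy $U'$ of $U$.

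Independence from $x$ should be read in two ways.
\begin{itemize}
\item For deterministic $x$, it means the law of $e(x;U)$ is the same for every $x$, which is exactly what the corollary provides.
\item If $x$ is itself random and independent of $U$, I would condition on $x$ and apply the corollary pointwise. The conditional law of $e$ given $x$ is then constant, so $e$ is independent of $x$. Measurability is harmless because $\modproj$ uses a fixed measurable tie-break rule.
\end{itemize}

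Next, central symmetry $\Vor(\Latt)=-\Vor(\Latt)$ was established right after the definition of the Voronoi cell. It gives $\Unif(-\Vor(\Latt))=\Unif(\Vor(\Latt))$ as measures, because the reflection $y\mapsto -y$ preserves Lebesgue measure and maps the cell onto itself.

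For \eqref{eq:error-mean-zero}, the uniform law on a centrally symmetric set of finite measure has mean zero. If $V\sim\Unif(\Vor(\Latt))$, then $-V$ has the same law, so $\E V=-\E V=0$. The integral is finite because $\Vor(\Latt)$ is a bounded polytope. Hence $\E_U[e(x;U)\mid x]=0$ for every $x$.

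I expect no real obstacle here; the substantive work is already done in \Cref{lem:pushforward}. The only point needing care is boundary ties. The identity $e=-\modproj(x+U)$ holds exactly under the fixed tie-break rule. Any ambiguity lives on $\partial\Vor(\Latt)+\Latt$, which has measure zero, and $x+U$ has a density, so the event has probability zero and does not affect the distributional statements.
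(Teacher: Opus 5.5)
Your proposal is correct and follows the paper's proof essentially step for step: both rewrite the error as $e(x;U)=-\modproj(x+U)$, invoke the Crypto Lemma (\Cref{cor:crypto}) to obtain $\Unif(-\Vor(\Latt))$ independent of $x$, and use the central symmetry of $\Vor(\Latt)$ for the zero mean. Your extra remarks, on random $x$ independent of $U$, on integrability from the boundedness of the cell, and on measure-zero boundary ties, are sound and spell out details the paper leaves implicit.
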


\begin{proof}
Expand the error:
\[
e(x;U) = \hat{x}(x;U) - x = \Qfn(x+U) - U - x =
-((x+U) - \Qfn(x+U)) = -\modproj(x+U).
\]
By the Crypto Lemma (\Cref{cor:crypto}),
$\modproj(x+U) \sim \Unif(\Vor(\Latt))$, so $e(x;U) =
-\modproj(x+U) \sim \Unif(-\Vor(\Latt))$ independent of $x$. The
mean is zero because $\Vor(\Latt)$ is centrally symmetric.
\end{proof}

\begin{corollary}[Inner-product unbiasedness]\label{cor:ip-unbiased}
For any deterministic $q\in\R^n$ and any source $x\in\R^n$,
\[
\E_U[\inner{q}{\hat{x}(x;U)} \mid x] = \inner{q}{x}.
\]
\end{corollary}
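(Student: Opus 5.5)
The plan is to reduce the corollary to \Cref{thm:schuchman} using linearity of the inner product and of expectation. Write the reconstruction as $\hat{x}(x;U) = x + e(x;U)$ with $e(x;U) = -\modproj(x+U)$. Since $q$ and $x$ are deterministic (or conditioned on), $\inner{q}{\hat{x}(x;U)} = \inner{q}{x} + \inner{q}{e(x;U)}$. Taking $\E_U[\,\cdot \mid x]$ and pulling the fixed vector $q$ out of the expectation gives
\[
\E_U[\inner{q}{\hat{x}(x;U)} \mid x] = \inner{q}{x} + \inner{q}{\E_U[e(x;U)\mid x]}.
\]
The second term vanishes by \eqref{eq:error-mean-zero}, which proves the claim.

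The only step that needs care is justifying the interchange of expectation and inner product, i.e.\ integrability of $e(x;U)$. By \ref{prop:range}, $e(x;U) \in -\Vor(\Latt) = \Vor(\Latt)$. This set is a bounded polytope: it is compact because the lattice has full rank, so the Voronoi cell lies inside a ball of the covering radius. Hence $|\inner{q}{e}| \le \norm{q}\,\sup_{v\in\Vor(\Latt)}\norm{v} < \infty$, the expectation exists, and linearity applies coordinatewise.

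For a random source $x$ that is independent of $U$, the statement should be read conditionally on $x$. Since $U$ is independent of $x$, the conditional law of $U$ given $x$ is still $\Unif(\Vor(\Latt))$. The deterministic-$x$ argument therefore applies pointwise for almost every realization, and the tower property is not even needed. The same holds if $q$ depends on $x$ but not on $U$: we condition on $(q,x)$ and repeat the argument.

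I expect no real obstacle here. The substantive content, namely that the error is mean zero for every fixed $x$ rather than only on average over $x$, is already established in \Cref{thm:schuchman} through the Crypto Lemma and central symmetry. The boundedness remark and the handling of the conditioning are the only points I will state explicitly.
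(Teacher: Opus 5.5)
Your proposal is correct and matches the paper's proof: write $\hat{x} = x + e(x;U)$, use linearity of the inner product and of expectation, and conclude from the zero-mean error of \Cref{thm:schuchman}. Your extra integrability remark (the error lies in the bounded cell $\Vor(\Latt)$) and the conditioning discussion are sound refinements that the paper leaves implicit.
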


\begin{proof}
By linearity of expectation and \Cref{thm:schuchman},
$\E_U[\inner{q}{\hat{x}} \mid x] = \inner{q}{x} +
\inner{q}{\E_U[e\mid x]} = \inner{q}{x}+ \inner{q}{0} =
\inner{q}{x}$.
\end{proof}

\begin{corollary}[Variance bound]\label{cor:variance}
With the same notation, and writing $\sigma^2_\Vor :=
\frac{1}{n}\E_{U\sim\Unif(\Vor)}[\norm{U}^2]$ for the
per-coordinate second moment of the Voronoi cell,
\[
\Var_U(\inner{q}{\hat{x}} \mid x) = q^\top \Cov_U(U) q \le
\lambda_{\max}(\Cov_U(U)) \norm{q}^2.
\]
If $\Vor(\Latt)$ is isotropic, i.e., $\Cov(U) = \sigma^2_\Vor I_n$,
this becomes
$\Var_U(\inner{q}{\hat{x}} \mid x) = \sigma^2_\Vor \norm{q}^2$.
\end{corollary}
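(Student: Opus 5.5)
The plan is to start from the error decomposition already established in \Cref{thm:schuchman}. There, $\hat{x}(x;U) = x + e(x;U)$ with $e(x;U) = -\modproj(x+U)$, and by the Crypto Lemma (\Cref{cor:crypto}) this error is distributed as $\Unif(-\Vor(\Latt))$, independently of $x$. Since $x$ and $q$ are fixed given the conditioning, $\inner{q}{\hat{x}} = \inner{q}{x} + \inner{q}{e}$, and the first term is a constant. So
\[
\Var_U(\inner{q}{\hat{x}}\mid x) = \Var(\inner{q}{e}) = q^\top \Cov(e)\, q .
\]

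Next I will identify $\Cov(e)$ with $\Cov_U(U)$. Because $e \sim \Unif(-\Vor(\Latt))$ and $-\Vor(\Latt) = \Vor(\Latt)$ by central symmetry, $e$ has the same law as $U$. Alternatively, $e \stackrel{d}{=} -U'$, and $\Cov(-U') = \Cov(U')$ without needing symmetry at all. Both are mean zero, so $\Cov(U) = \E[UU^\top]$, which is finite because $\Vor(\Latt)$ is bounded. This gives the equality $q^\top \Cov_U(U) q$.

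The inequality is the Rayleigh quotient bound for the symmetric positive semidefinite matrix $\Cov(U)$: $q^\top M q \le \lambda_{\max}(M)\norm{q}^2$, which follows from the spectral decomposition.

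For the isotropic case, substitute $\Cov(U) = c I_n$ and check that $c = \sigma^2_\Vor$. Taking traces gives $nc = \operatorname{tr}\Cov(U) = \E\norm{U}^2$, so $c = \tfrac1n \E\norm{U}^2 = \sigma^2_\Vor$, as defined. The only subtle point in the whole argument is that the variance is computed with respect to $U$ alone for a fixed $x$. This is exactly what the "independent of $x$" clause of the Crypto Lemma justifies, so no real obstacle is expected.
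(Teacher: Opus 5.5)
Your proposal is correct and follows the route the paper intends. The paper writes no separate proof and treats the corollary as an immediate consequence of \Cref{thm:schuchman}: for each fixed $x$ the error $e(x;U)=-\modproj(x+U)$ is uniform on $-\Vor(\Latt)$, so $\Var_U(\inner{q}{\hat{x}}\mid x)=q^\top\Cov(e)\,q=q^\top\Cov_U(U)\,q$, and the Rayleigh-quotient bound gives the inequality; your trace argument identifying the scalar as $\sigma^2_\Vor$ is a small, correct addition that the paper simply folds into its definition of isotropy.
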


\begin{remark}\label{rem:tightness}
For the lattices used in \method ($\Z$, $\Atwo$, $\Dfour$,
$\Eone$), the Voronoi cell is isotropic — the lattice's
symmetry group acts irreducibly on $\R^n$, and by Schur's lemma
any invariant rank-2 tensor is a scalar multiple of $I_n$. Hence
$\Cov(U) = \sigma^2_\Vor I_n$ exactly. Numerically,
$\sigma^2_\Vor(\Z) = 1/12 = 0.0833$ and
$\sigma^2_\Vor(\Eone) \approx 0.287$ in our scaling.
\end{remark}

\subsection{Composition with the \method KV pipeline}

\begin{proposition}\label{prop:composition}
Let $R$ be any orthogonal matrix ($R^\top R = I$), let $\alpha > 0$
be the lattice's calibration scale, and let $\hat{x}$ be the
\method KV reconstruction of $x$:
\begin{align*}
s(x) &:= \frac{\alpha\sqrt{n}}{\norm{x}} \cdot Rx, \\
\widehat{s} &:= \Qfn(s(x) + U) - U, \\
\hat{x} &:= \frac{\norm{x}}{\alpha\sqrt{n}} R^\top \widehat{s}.
\end{align*}
Then $\E_U[\hat{x} \mid x] = x$, and in particular
$\E_U[\inner{q}{\hat{x}}\mid x] = \inner{q}{x}$ for every
deterministic $q\in\R^n$.
\end{proposition}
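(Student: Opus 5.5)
The plan is to reduce the statement to \Cref{thm:schuchman} by showing that everything surrounding the dithered quantizer is a deterministic linear map once $x$ is fixed. Fix $x\neq 0$; the degenerate case $x=0$ is handled by convention, since the stored norm is zero and $\hat{x}=0=x$. Set $c(x):=\alpha\sqrt{n}/\norm{x}$, so $s(x)=c(x)Rx$ is a deterministic vector in $\R^n$, and the only randomness is $U$, drawn independently of $x$ and hence of $s(x)$.

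First I apply \Cref{thm:schuchman} to the source $s(x)$. This gives $\widehat{s}=s(x)+e(s(x);U)$ with $\E_U[e(s(x);U)\mid x]=0$, so $\E_U[\widehat{s}\mid x]=s(x)$. The required independence is exactly the hypothesis of \Cref{def:dithered}: $U$ is uniform on $\Vor(\Latt)$ and independent of everything else. Because $s(x)$ is measurable in $x$, conditioning on $x$ lets me treat $s(x)$ as a fixed point.

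Next I push the expectation through the decoder. Since $\hat{x}=c(x)^{-1}R^\top\widehat{s}$ and $c(x)^{-1}R^\top$ is deterministic given $x$, linearity of expectation gives
\[
\E_U[\hat{x}\mid x]=c(x)^{-1}R^\top\E_U[\widehat{s}\mid x]=c(x)^{-1}R^\top c(x)Rx=R^\top Rx=x,
\]
using $R^\top R=I$. Pairing with any deterministic $q$ then gives the inner-product claim, exactly as in \Cref{cor:ip-unbiased}. Integrability holds because $\widehat{s}-s(x)$ lies in the bounded set $-\Vor(\Latt)$.

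I expect the main obstacle to be conceptual rather than computational. The argument depends on the normalization using the exact norm $\norm{x}$, or the stored norm used consistently at encode and decode, together with $R$ and $\alpha$ being independent of $U$. If the stored norm were itself quantized, say to \texttt{float16}, I would note that the same argument still goes through as long as the identical deterministic scalar $c$ appears in both $s$ and $\hat{x}$: the factors $c$ and $c^{-1}$ cancel for any fixed nonzero $c$. The remaining points to check are whether $R$ is random, as with a Haar draw, and the tie-breaking on $\partial\Vor(\Latt)$. For random $R$, I would condition on $R$ as well, which is valid because $U$ is independent of $R$. Ties on $\partial\Vor(\Latt)$ have measure zero and do not affect the expectation.
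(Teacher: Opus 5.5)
Your proposal is correct and follows the paper's proof essentially verbatim: you apply \Cref{thm:schuchman} to the vector $s(x)$, which is deterministic given $x$, then pull the $U$-independent linear decoder $\norm{x}/(\alpha\sqrt{n})\,R^\top$ through the conditional expectation and use $R^\top R = I$. Your extra remarks are all sound refinements: the $x=0$ convention, integrability from the boundedness of $\Vor(\Latt)$, the cancellation when the same stored (possibly quantized) norm is used at encode and decode, and conditioning on a random $R$ independent of $U$, which the paper also notes right after its proof.
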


\begin{proof}
Apply \Cref{thm:schuchman} in the lattice's coordinate system to
$s := s(x)$: $\E_U[\widehat{s}\mid s] = s$. The post-quantization
map $\widehat{s} \mapsto (\norm{x}/(\alpha\sqrt{n}))R^\top \widehat{s}$
is linear and depends only on $x$ (not on $U$), so by linearity of
conditional expectation, $\E_U[\hat{x}\mid x] =
(\norm{x}/(\alpha\sqrt{n})) R^\top s(x) = R^\top R x = x$.
Inner-product unbiasedness follows.
\end{proof}

\Cref{prop:composition} holds for any orthogonal $R$:
deterministic identity, deterministic permutation, random Haar,
random sign diagonal; including when $R$ is itself random but
\emph{independent} of $U$, because the proof conditions on $R$ and
then averages.

\subsection{Contrast: QJL alone is biased per-vector}

The QJL-without-dither variant uses random rotation but no dither,
so its reconstruction is
\begin{equation}\label{eq:qjl-def}
\hat{x}_{\mathrm{QJL}}(x;S) := S^\top \Qfn(Sx),
\qquad S \sim \Unif(\mathrm{O}(n)),
\end{equation}
with error $e_{\mathrm{QJL}}(x;S) = -S^\top \modproj(Sx)$.

\begin{proposition}\label{prop:qjl}
For every fixed realization $S = S_0 \in \mathrm{O}(n)$ and every
fixed source $x \in \R^n$, the QJL error is a deterministic vector
$-S_0^\top \modproj(S_0 x)$, generally non-zero, so
$\hat{x}_{\mathrm{QJL}}(x;S_0)$ is a biased estimator of $x$. There
is no per-vector analog of \Cref{cor:ip-unbiased} for QJL alone.
\end{proposition}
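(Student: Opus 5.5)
The claim splits into two parts. The first is that, once $S=S_0$ and $x$ are fixed, the QJL error is deterministic; this is by construction. The second is that it is \emph{generally} non-zero, and this needs a witness. I will handle them in that order and then explain why no per-vector analogue of \Cref{cor:ip-unbiased} exists.

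\textbf{Step 1: determinism.} I expand \eqref{eq:qjl-def}. Orthogonality gives $S_0^\top S_0 x = x$, so
\[
e_{\mathrm{QJL}}(x;S_0) = S_0^\top \Qfn(S_0x) - S_0^\top S_0 x = -S_0^\top \modproj(S_0 x).
\]
Both $\Qfn$ and $\modproj$ are deterministic maps (with the fixed measurable tie-break). The right-hand side therefore involves no randomness once $(x,S_0)$ is fixed. Any conditional expectation given $(x,S_0)$ returns the vector itself, so $\E[\hat{x}_{\mathrm{QJL}}\mid x,S_0]-x = -S_0^\top\modproj(S_0x)$.

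\textbf{Step 2: non-vanishing.} Since $S_0^\top$ is invertible, the error vanishes exactly when $\modproj(S_0x)=0$, i.e.\ when $S_0x\in\Latt$. For fixed $S_0$, the set of $x$ with $S_0 x\in\Latt$ is $S_0^\top\Latt$, which is countable and hence Lebesgue-null. So for almost every $x$ the bias vector is non-zero. This is the precise meaning of ``generally'' in the statement. For an explicit witness, I take any $x=S_0^\top y$ with $y\in\operatorname{int}\Vor(\Latt)\setminus\{0\}$. Then $\modproj(S_0x)=y\neq 0$.

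\textbf{Step 3: failure of inner-product unbiasedness.} For such $x$ I choose $q = e_{\mathrm{QJL}}(x;S_0)\neq 0$. Then
\[
\inner{q}{\hat x_{\mathrm{QJL}}}-\inner{q}{x}=\norm{q}^2>0.
\]
This holds deterministically, so it holds under any averaging that conditions on $(x,S_0)$. Hence no per-vector analogue of \Cref{cor:ip-unbiased} holds.

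For contrast I will remark that averaging over $S\sim\Unif(\mathrm{O}(n))$ can restore zero mean. The mechanism is that $S\mapsto -S$ is Haar-invariant together with central symmetry of $\modproj$ off ties, $\modproj(-y)=-\modproj(y)$. This is exactly the distribution-average unbiasedness, and it is only available before $S$ is frozen.

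The main obstacle is interpretive: making ``generally non-zero'' precise, and handling ties on $\partial\Vor(\Latt)$. I plan to phrase everything as an almost-everywhere statement in $x$, which makes the tie-break choice irrelevant.
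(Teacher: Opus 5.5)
Your argument for the proposition is correct and is the paper's argument made explicit. The paper's proof is just your Step~1: with $S_0$ and $x$ fixed, $-S_0^\top\modproj(S_0x)$ carries no remaining randomness. Your Steps~2--3 supply what the paper calls self-evident: the error vanishes only on the null set $S_0^\top\Latt$, and $q=e_{\mathrm{QJL}}(x;S_0)$ gives the concrete violation $\inner{q}{\hat{x}_{\mathrm{QJL}}}-\inner{q}{x}=\norm{q}^2>0$.

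The closing ``for contrast'' remark is not needed for the proposition, but it is wrong and should be dropped or corrected.

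\emph{Pairing $S$ with $-S$ cancels nothing.} Since $\Latt=-\Latt$ gives $\Qfn(-y)=-\Qfn(y)$ off ties, you get $\hat{x}_{\mathrm{QJL}}(x;-S)=(-S)^\top\Qfn(-Sx)=S^\top\Qfn(Sx)=\hat{x}_{\mathrm{QJL}}(x;S)$. The paired errors therefore coincide rather than cancel.

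\emph{The symmetry that does work.} Use right multiplication by an orthogonal $R$ with $Rx=x$. Haar invariance and $e_{\mathrm{QJL}}(x;SR)=R^\top e_{\mathrm{QJL}}(x;S)$ give $\E_S[e_{\mathrm{QJL}}(x;S)]=R^\top\E_S[e_{\mathrm{QJL}}(x;S)]$ for all such $R$. This only forces the averaged error to be a multiple $c(x)\,x$ of $x$.

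\emph{The multiple is nonzero in general.} If $0<\norm{x}<r_{\text{pack}}(\Latt)$, then $Sx$ lies strictly inside $\Vor(\Latt)$ for every $S$. Hence $\hat{x}_{\mathrm{QJL}}\equiv 0$, and the Haar-averaged error is $-x$.

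So averaging over $S$ gives at best unbiasedness up to a scalar shrinkage, not exact zero mean. QJL-type estimators remove that shrinkage with an explicit normalization constant.
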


\begin{proof}
Self-evident from \eqref{eq:qjl-def}: with $S_0$ fixed, neither
side of the equation depends on any further randomness.
\end{proof}

\paragraph{Why the empirical QJL bias appears small.} A typical
sweep test
$\frac{1}{N}\sum_{i=1}^N \inner{q^{(i)}}{e_{\mathrm{QJL}}(x^{(i)};S_0)}$
averages over many iid $(q^{(i)}, x^{(i)})$. Because $q$ is independent of
everything else and zero-mean, this average converges
to $\inner{\E q}{\cdot} = 0$. That follows from $\E q = 0$,
\emph{not} from any QJL property: a non-rotated lattice quantizer
passes the same test. What QJL \emph{does} provide is
approximately isotropic error covariance, $S_0^\top \Cov_x(\modproj(S_0
x))S_0$ close to a scalar multiple of $I_n$ for a generic Haar $S_0$.
This is the genuine benefit of the random rotation, but it is not
unbiasedness.

\subsection{Practical sampler}\label{app:dither-sampler}

The proof requires $U\sim\Unif(\Vor(\Latt))$. The implementation
uses the ``mod-$\Latt$ trick'':
\[
U_{\mathrm{cube}} \sim \Unif([-2,2)^n),
\quad
U := \modproj(U_{\mathrm{cube}}).
\]
This is exact iff $[-2,2)^n$ is itself a fundamental domain of
$\Latt$ (\Cref{lem:pushforward}). For $\Eone$, the cube
$[-2,2)^8$ is not a union of $\Latt$-translates of
$\Vor(\Latt)$, so the projection is only \emph{approximately}
uniform; we validate the sampler by checking that
$\frac{1}{n}\E\norm{U}^2$ matches the
analytical $\sigma^2_\Vor$ to within $5\%$ on all four lattices.
An exact alternative is the fundamental-parallelepiped sampler:
draw $T_i\stackrel{\text{iid}}{\sim}\Unif[0,1)$ for $i=1,\dots,n$,
form $Y = \sum T_i b_i$ over the lattice basis $\{b_i\}$, and
project $U := \modproj(Y)$. This is exactly uniform on $\Vor(\Latt)$
by \Cref{lem:pushforward}, since the parallelepiped is by
construction a fundamental domain.

%% file: appendix/lattice_constants.tex
\section{Calibration: setting the operating point}
\label{app:calibration}

\method exposes one user-facing knob, the target rate $b$ in
bits/scalar, and turns it into a concrete quantizer in two steps:
choose the quantization SNR that yields $b$, then set the per-vector
scale $\alpha$ that realizes that SNR. The first step needs an
empirical rate curve $b(\snr)$, since the Rice-coded rate has no
closed form; the second is closed-form in the lattice's
Voronoi second moment. Both are built once on synthetic
iid-Gaussian data and, crucially, apply unchanged to every
weight and KV tensor in any model (\Cref{app:calibration-transfer}).
This is what lets \method hit an \emph{arbitrary} fractional rate,
which fixed-rate codebooks cannot.

\subsection{From a target rate to an SNR (empirical)}
\label{app:bps-calibration}

The rate of the stripped, Rice-coded stream combines
the lattice's granular gain, the bit-stripping transform, an
\emph{integer}-parameter Rice coder, and the 8-bit clip, none of
which has a clean closed form at the operating points of
interest. We therefore measure it: for each lattice we draw
$N \sim 10^{5}$ iid-Gaussian tiles, quantize at a grid of SNRs
(each set by the closed-form $\alpha$ of \Cref{app:alpha-calibration}),
and record the \emph{realized} Rice rate (\Cref{fig:rate-vs-snr}).

\begin{figure}[h]
\centering
\includegraphics[width=0.72\linewidth]{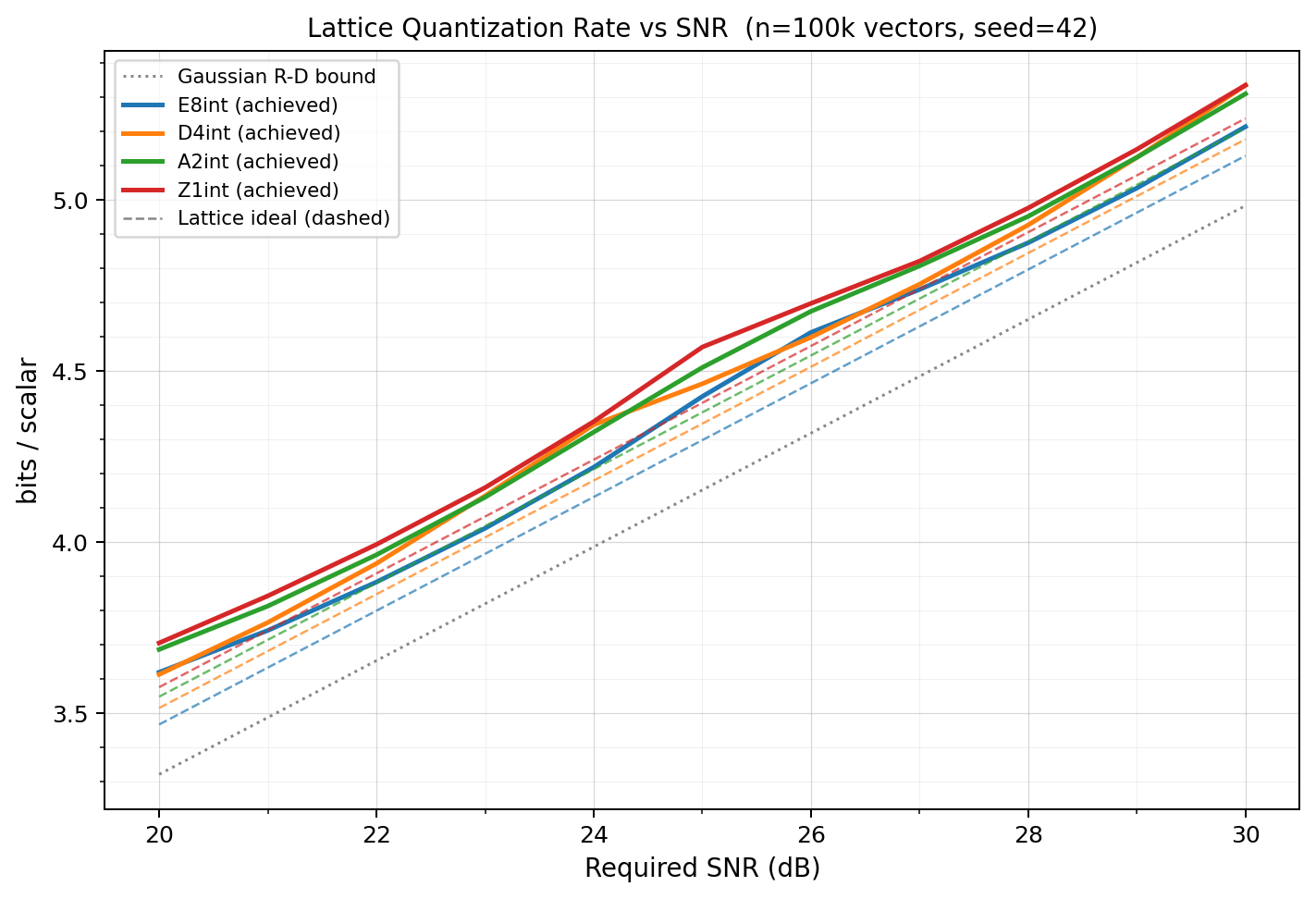}
\caption{Empirical Rice rate versus target SNR on iid-Gaussian
tiles ($N{=}10^{5}$, seed~42), for the four lattices. Solid:
achieved bits/scalar; dashed: the lattice ideal
$R_D + \tfrac12\log_2(2\pi e\,G(\Latt))$; dotted: the Gaussian
rate-distortion bound $R_D = \tfrac12\log_2 \mathrm{SNR_{lin}}$.
Each curve is smooth and monotone, so inverting it sends any target
rate to a unique SNR; the achieved rate stays $\approx 0.1$~bps
above the lattice ideal throughout, ordering
$\Eone < \Dfour < \Atwo < \Z$.}\label{fig:rate-vs-snr}
\end{figure}

Two properties make this usable. \emph{(i)~Invertibility.} $b(\snr)$
is monotone increasing, so the implementation interpolates the
tabulated curve to recover the unique SNR achieving any requested
$b$; because the table stores the realized Rice rate (not an entropy
estimate), selecting against it hits any target to within
$\sim\!0.01$~bps (\Cref{sec:results-weights} confirms this on-model).
\emph{(ii)~Tightness.} The realized rate sits
$\approx\!0.1$~bps above the lattice ideal. This gap is almost
entirely the redundancy of the stateless, power-of-two Rice code
over the symbols' \emph{marginal} entropy; that marginal entropy
already meets the lattice ideal (\Cref{app:strip-optimality}), so
the $0.1$~bps reflects the coder's simplicity, not residual
lattice or inter-symbol inefficiency.

The table is the rate \emph{of} the per-lattice Rice structure:
$\Eone$ uses a single parameter $k_s$ (the coset bit
folded into the combined symbol); $\Dfour$ and $\Atwo$ use two
($k_s$ with $k_t = k_s - 1$ for $\Dfour$, and $k_{t_y}, k_{n_x}$ for
$\Atwo$); $\Z$ uses one. The halving identity $k_t = k_s - 1$ is
checked at runtime (\Cref{sec:method-rice}).

\subsection{From an SNR to the scale (closed form)}
\label{app:alpha-calibration}

Given the SNR, the scale is analytic. For $x \sim \mathcal{N}(0,
I_N)$ the high-rate quantization error has mean square equal to the
lattice's Voronoi second moment, $\mathrm{MSE}_{\mathrm{vor}} =
G(\Latt)\,N\,V_\Latt^{2/N}$, while the signal power is
$\E\norm{\alpha x}^2 = \alpha^2 N$. Setting their ratio to the
target $\mathrm{SNR_{lin}} = 10^{\snr/10}$ yields
\begin{equation}\label{eq:alpha-calib}
  \alpha(\snr, \Latt)
  \;=\; \sqrt{\frac{\mathrm{SNR_{lin}}\,\mathrm{MSE}_{\mathrm{vor}}}{N}}
  \;=\; \sqrt{\mathrm{SNR_{lin}}\;G(\Latt)\,V_\Latt^{2/N}},
\end{equation}
with $V_\Latt$ the covolume of the integer realization. No
calibration data are needed; the only assumption is the
high-rate approximation, which we verify: the empirical Voronoi
second moment is within $5\%$ of $\mathrm{MSE}_{\mathrm{vor}}$ for
$\Eone, \Dfour, \Atwo$ (within $20\%$ for $\Z$, whose tiny cell is
sensitive to \bff rounding), and the realized SNR lands within
$\approx\!0.1$~dB of the target across $20$--$30$~dB.

\paragraph{The code fits a signed byte.} \Cref{eq:alpha-calib} also
fixes the code magnitudes, hence whether the stored integer
coordinates fit \intEight. After the per-tile RHT and $\ell_2$
normalization each input scalar has unit variance
(\Cref{app:calibration-transfer}); the quantizer rounds $\alpha u$ to the
lattice, so each stored coordinate tracks $\alpha u_i$, giving
\begin{equation}\label{eq:int8-gaussian}
  Y_i \;\approx\; \mathcal{N}(0, \alpha^2),
\end{equation}
with the granular noise adding only $O(G(\Latt))$ to the variance, negligible
against $\alpha^2$. An overflow ($|Y_i| > 127$) is therefore a
$127/\alpha$-sigma tail event, with per-coordinate probability
$\mathrm{erfc}\!\big(127/(\alpha\sqrt 2)\big)$.

Bit-stripping (\Cref{sec:method-lattices,app:strip-optimality}) only
\emph{shrinks} magnitudes: for $\Eone$/$\Dfour$ the stored $s$-coordinates
have $Y\!\gg\!1$ and the parity coordinate is halved again, so the only
un-shrunk coordinates are $\Z$'s scalars and $\Atwo$'s $n_x$. The raw
lattice integer is thus the binding case, and bounding it bounds the
entropy-coded symbols a fortiori. The binding \emph{rate} is the top of
the sweep, $5$~bps, where $\alpha$ (and with it the code magnitude) is
largest; \Cref{tab:int8-budget} evaluates \eqref{eq:int8-gaussian} there.

\begin{table}[h]
\centering\small
\begin{tabular}{l c c c c}
\toprule
Lattice & $\alpha$ & $127/\alpha$ & $P(|Y_i| > 127)$ & exp.\ overflows / $7{\times}10^{9}$ \\
\midrule
$\Eone$         & $14.7$ & $8.6\sigma$ & $6\times10^{-18}$ & $4\times10^{-8}$ \\
$\Dfour$        & $17.3$ & $7.3\sigma$ & $2\times10^{-13}$ & $1\times10^{-3}$ \\
$\Atwo$ ($n_x$) & $13.6$ & $9.3\sigma$ & $1\times10^{-20}$ & $7\times10^{-11}$ \\
$\Z$            & $7.4$  & $17\sigma$  & ${<}10^{-60}$     & $\approx 0$ \\
\bottomrule
\end{tabular}
\caption{Worst-case \intEight overflow at the top of the sweep ($5$~bps),
the rate at which $\alpha$, and hence the code magnitude, is largest.
With $Y_i \approx \mathcal{N}(0,\alpha^2)$ an overflow is a
$127/\alpha$-sigma event; the last column multiplies the per-coordinate
tail by the $\sim\!7\times10^{9}$ quantized weights of Llama-3.1-8B. The
binding lattice is $\Dfour$ (largest $\alpha$), still $7.3\sigma$ from the
byte boundary.}\label{tab:int8-budget}
\end{table}

Even the worst lattice, $\Dfour$, expects only $\sim\!10^{-3}$ saturations
across the whole model; the default $\Eone$ sits $8.6\sigma$ out, at
$\sim\!4\times10^{-8}$. And a saturation, when it occurs, is a clamp to
$\pm127$ that perturbs one coordinate by less than the granular step, not
a corruption. The analytic tail is moreover conservative: empirically
(\Cref{fig:e8-budget}) the largest coordinate over the full
$20$--$30$~dB sweep is $\approx\!3.4\sigma$ ($|Y|\!\approx\!50$ for
$\Eone$), because the per-tile $\ell_2$ normalization caps the maximum
near $\sqrt{2\ln 1024}\approx 3.7\sigma$, tighter than a free Gaussian.
This justifies storing the raw code in one signed byte per scalar: the
\intEight Tensor-Core format and the entropy-free fallback.

\begin{figure}[h]
\centering
\includegraphics[width=0.72\linewidth]{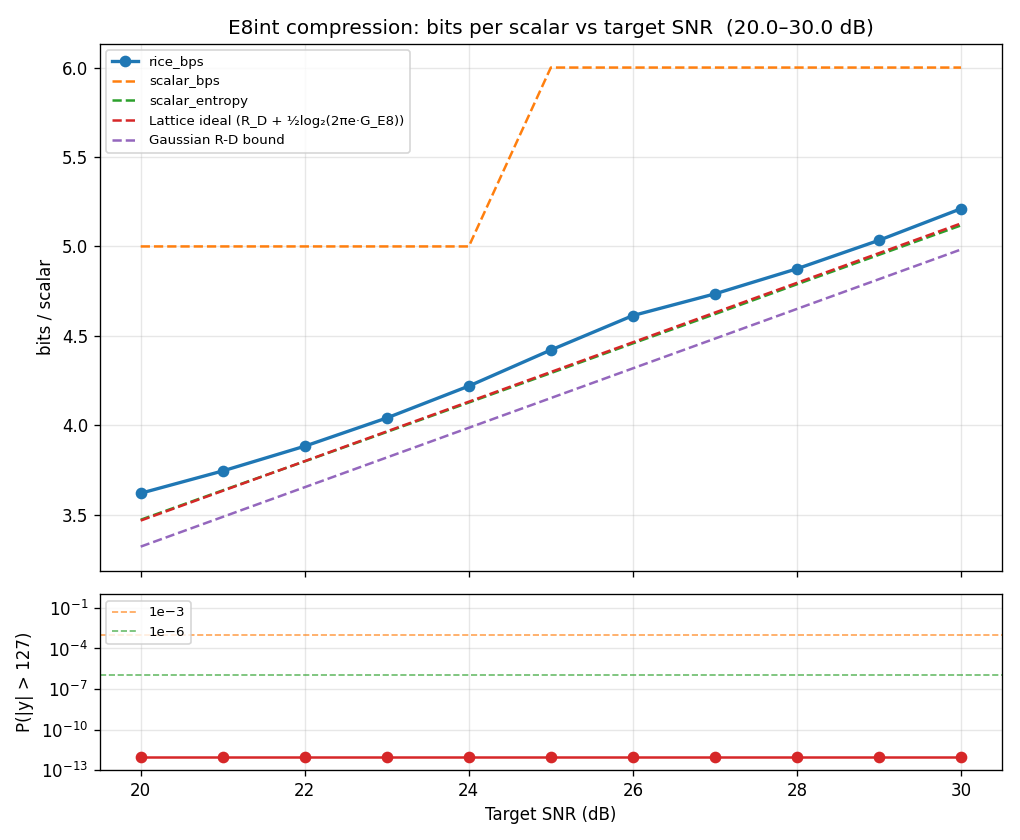}
\caption{$\Eone$ calibration detail. \emph{Top:} achieved Rice rate
tracks the lattice ideal within $\approx 0.1$~bps over $20$--$30$~dB
(the dashed \texttt{scalar\_bps} is the byte-aligned \intEight fallback,
which steps with the integer byte budget; the \texttt{scalar\_entropy}
curve, the marginal entropy of the stripped indices, lies
essentially on the lattice ideal, cf.\ \Cref{app:strip-optimality}).
\emph{Bottom:} sampled $\intEight$ overflow rate $P(|c_i| > 127)$, which
is $0$ at every operating point (markers floored at $10^{-12}$ to render
on the log axis); the largest coordinate seen is $\approx\!3.4\sigma$,
well inside the $\ge\!7.3\sigma$ analytic margin of
\Cref{tab:int8-budget}.}\label{fig:e8-budget}
\end{figure}

\subsection{One calibration for all data}
\label{app:calibration-transfer}

Both pieces are built on iid-Gaussian tiles, yet \method
applies them to real weights and KV activations with no per-tensor
recalibration. The justification is the per-tile RHT followed by
$\ell_2$ normalization (\Cref{sec:method-rht,sec:method-normalize}):
the RHT spreads each tile's energy across its
coordinates and normalization fixes its radius, so every tile is
approximately an isotropic Gaussian on the sphere of radius
$\alpha\sqrt{n}$, precisely the distribution the calibration was
measured on. A single table and the closed form of
\Cref{eq:alpha-calib} therefore serve all tensors, layers, and
models, which is what makes \method data-free.

\subsection{Stripping is rate-optimal: marginal entropy meets the lattice ideal}
\label{app:strip-optimality}

A striking feature of the sweep is that the per-scalar
\emph{marginal} entropy of the stripped symbols
($\frac1N\sum_i H(s_i)$, reported as \texttt{scalar\_entropy})
coincides with the lattice ideal
$R_{\mathrm{ideal}} = R_D + \tfrac12\log_2\!\bigl(2\pi e\,G(\Latt)\bigr)$
to within $\sim\!10^{-2}$~bps at every operating point and lattice
(\Cref{fig:e8-budget}, top, where the two curves overlie).
This is no coincidence: it follows from composing two classical
high-resolution results with the design of the strip.

\paragraph{(i) The index entropy equals the lattice ideal.} For a
source $X$ with finite differential entropy quantized by a lattice
$\Latt$ at fine resolution, the index entropy obeys the
high-resolution law
\begin{equation}\label{eq:gishpierce}
  H\bigl(\Qfn(X)\bigr) \;=\; h(X) \;-\; \log_2 \operatorname{vol}(\Vor(\Latt)) \;+\; o(1),
\end{equation}
the $o(1)$ vanishing as the cell shrinks
\citep{gish1968asymptotically,lookabaugh1989high}. With per-dimension
distortion $D = G(\Latt)\,\operatorname{vol}(\Vor(\Latt))^{2/N}$
\citep[Ch.~3]{conway1999sphere} and a white Gaussian source
$X \sim \mathcal N(0,\sigma^2 I_N)$, for which
$h(X)/N = \tfrac12\log_2(2\pi e\,\sigma^2)$, dividing
\eqref{eq:gishpierce} by $N$ gives
\begin{equation}\label{eq:idealrate}
  \frac1N H\bigl(\Qfn(X)\bigr)
  \;\longrightarrow\;
  \underbrace{\tfrac12\log_2\frac{\sigma^2}{D}}_{R_D}
  \;+\; \tfrac12\log_2\!\bigl(2\pi e\,G(\Latt)\bigr)
  \;=\; R_{\mathrm{ideal}}.
\end{equation}
The excess $\tfrac12\log_2(2\pi e\,G(\Latt))$ over the Gaussian
rate-distortion bound $R_D$ is the lattice's space-filling loss,
exactly the redundancy of an entropy-coded dithered lattice
quantizer above $R(D)$
\citep{zamir1992universal,erez2005closeness,zamir2014lattice}.

\paragraph{(ii) Stripping reduces the marginal sum to the joint entropy.}
The quantity \texttt{scalar\_entropy} is not the joint entropy
\eqref{eq:idealrate} but the per-scalar \emph{sum of marginals},
$\frac1N\sum_i H(s_i)$. The strip $\mathrm{Strip}_\Latt$ is a
lossless bijection $\Latt \leftrightarrow \mathbb Z^{N}$
(\Cref{sec:method-lattices}), hence preserves the joint entropy,
$H(s_1,\dots,s_N) = H(\Qfn(X))$, and by subadditivity
\begin{equation}\label{eq:totalcorr}
  \frac1N\sum_i H(s_i)
  \;=\; \frac1N H\bigl(\Qfn(X)\bigr) \;+\; \frac{C}{N},
  \qquad
  C \;=\; \sum_i H(s_i) - H(s_1,\dots,s_N) \;\ge\; 0,
\end{equation}
with $C$ the total correlation (multi-information) of the symbols
\citep[Ch.~2]{cover2006elements}. Two design choices send
$C \to 0$. First, the strip is built to annihilate \emph{exactly}
the lattice's deterministic dependencies, the parity and coset
constraints of \Cref{sec:method-lattices}, the only exact couplings
among the integer coordinates. Second, the residual
\emph{statistical} dependence vanishes in the high-rate limit: as
the cell shrinks $\Qfn(X) \to X$, so each stripped symbol converges
to a scaled copy of an i.i.d.\ Gaussian source coordinate
($s_i \to \alpha X_i / 2$ for $\Eoneint$, and likewise for the
others) and the symbols become mutually independent. The subtractive
dither of \Cref{cor:crypto} makes this precise: it renders the
quantization error independent of $X$
\citep{zamir1992universal,schuchman1964dither}, removing the
input-dependent part of the residual correlation at any rate. Hence
$C/N \to 0$ and, combining \eqref{eq:totalcorr} with
\eqref{eq:idealrate}, $\frac1N\sum_i H(s_i) \to R_{\mathrm{ideal}}$.

\paragraph{Residual gap and consequence.} The measured discrepancy is
$\lesssim 0.02$~bps and changes sign: $C/N \ge 0$ pushes
\texttt{scalar\_entropy} above $R_{\mathrm{ideal}}$, while
finite-rate corrections to \eqref{eq:gishpierce} and the
$\approx\!0.06$~dB \bff SNR deficit push it below; both effects are
$O(10^{-2})$. Because the stripped symbols are statistically
near-independent and their marginal entropy sits at the lattice ideal,
a memoryless coder is near rate-optimal: no context or joint coder can
recover more than the vanishing $C/N$. Stripping is thus rate-optimal
by construction, not a heuristic, letting the Rice coder
(\Cref{sec:method-rice}) concede only $\sim\!0.1$~bps to the ideal.